\title{ On subbagging Boosted  Probit Model Trees}
\author{ \href{https://math.cas.lehigh.edu/tian-qin}{\hspace{1mm}Tian Qin} \\
	Department of Mathematics\\
	Lehigh University\\
	Bethlehem, PA 18015 \\
	\texttt{tiq218@lehigh.edu} \\
	\And
	\href{https://math.cas.lehigh.edu/wei-min-huang}{\hspace{1mm}Wei-Min Huang} \\
	Department of Mathematics\\
	Lehigh University\\
	Bethlehem, PA 18015 \\
	\texttt{wh02@lehigh.edu} \\
}
\begin{document}

\maketitle

\begin{abstract}
 With the insight of variance-bias decomposition, we design a new hybrid bagging-boosting algorithm named SBPMT for classification problems. For the boosting part of SBPMT, we propose a new tree model called Probit Model Tree (PMT) as base classifiers in AdaBoost procedure. For the bagging part, instead of subsampling from the dataset at each step of boosting, we perform boosted PMTs on each subagged dataset and combine them into a powerful "committee", which can be viewed an incomplete U-statistic. Our theoretical analysis shows that (1) SBPMT is consistent under certain assumptions, (2) Increase the subagging times can reduce the generalization error of SBPMT to some extent and (3) Large number of ProbitBoost iterations in PMT can benefit the performance of SBPMT with fewer steps in the AdaBoost part. Those three properties are verified by a famous simulation designed by \cite{Simudata}. The last two points also provide a useful guidance in model tuning. A comparison of performance with other state-of-the-art classification methods illustrates that the proposed SBPMT algorithm  has competitive prediction power in general and performs significantly better in some cases.
\end{abstract}

\keywords{Subagging \and ProbitBoost \and Weak consistency \and Incomplete U-statistic \and Generalization error bound}

\section{Introduction}

\label{Introduction}

Bagging has been a critical ensemble learning model for data classification or regression in past few decades. As an ensambling method, bootstrap bagging (\cite{Breiman2004BaggingP}) ensembles the results of base tree learners so that the variance of the model is decreased without increasing the bias. As a variant of bagging, Random Forests (RFs) further utilize a random subset of features at each possible split when constructing a single tree learner. \cite{RF}  analyzed how bagging and random subset of features can contribute to the accuracy gain under different conditions by decorrelating bootstrapped trees. \cite{Breiman2004BaggingP} suggested that bagging unstable (i.e high variance) classifiers usually generates a better result.

Instead of creating a strong learner by combining  weak learners fitted simultaneously, we can actually merge  weak classifiers into a strong classifier by sequentially revising them and adding them up with optimal weights, which gives rise to the idea of boosting. \cite{FREUND1997119} developed AdaBoost.M1 algorithm which outputs a weighted sum of a sequence of weak learners. The weights are tweaked in favor of those instances misclassified by previous learners. Nowadays, we have many other popular ensemble methods based on the idea of gradient boosting, which generalizes the boosting in functional spaces. For example, \cite{xgboostchen} followed the idea of gradient boosting and employed a different design of functional optimization and weak classifier learning algorithm. As a result, they proposed  one of the most successful ensemble learning algorithms called XGBoost. 



Empirically, if we iterate the boosting for a large number of steps, the training error of boosted classifier will be close to zero, which implies that boosting can somehow reduce the bias of original model. On the other hand, bagging can alleviate overfitting by averaging the outputs of base classifiers to reduce the variance of ensemble model. Observing the variance-bias decomposition in $L_2$ loss function, it's natural to combine bagging and boosting together to achieve smaller prediction error.



\cite{adaptivebagging} proposed a hybrid bagging-boosting procedure for least-squared regression of additive expansion. \cite{sgb} incorporated randomness in gradient boosting where a subsample of training set is drawn without replacement at each iteration of gradient boosting. \cite{brf} introduced a boosting algorithm into Random Forest learning method to achieve high accuracy with the smaller size of forest. \cite{onestepbrf} proposed One-Step Boosted Forest for regression problem. They also gave a variance estimate of Boosted Forest which makes the statistical inference of ensemble model possible.

However, for "boosting-type" method, we typically need hundreds of iteration times to turn a weak learner into a stronger one. Similarly, for "bagging-type", such as Random Forests, hundreds or thousands of weak learners are required to achieve a decent prediction accuracy.  It's hard for practitioners to follow the performance of each weak learner when they analyze the model and try to interpret the results.  \cite{brf}  illustrated that combining boosting and random forest is  helpful in reducing the number of base learners required for boosted random forest. Following these clues, we introduce a model based bagging-boosting procedure whose simple but efficient structure could achieve high prediction accuracy with small number of base learners. 


As to the interpretability of the classifier, if the base structure of a classifier is decision tree, which is commonly used in many ensemble methods, employing generalized linear models (GLM) at terminal nodes could be a good choice because of its flexibility. The fitting process of a GLM is to maximize its corresponding sample log-likelihood function by an iteratively re-weighted least-squares(IRWLS) approach:
\begin{equation}
    \mathcal{\beta}^{(t+1)}=\mathcal{\beta}^{(t)}+\mathcal{J}^{-1}(\mathcal{\beta}^{(t)})u(\mathcal{\beta}^{(t)})
\end{equation}
where $\mathcal{\beta}^{(t+1)}$ is updated vector of parameters at step $t$, $\mathcal{J}^{-1}(\mathcal{\beta}^{(t)})$ is the observed information matrix and $u(\mathcal{\beta}^{(t)})$ is the score function. However, if we fit a GLM by all features at each terminal node of tree, it's highly likely that the information matrix  will have ill-posed condition as the number of observations at a terminal node is typically small. As a result, matrix inversion during fitting procedure is unavailable. To avoid those drawbacks, we can assume an additive form of GLMs. \cite{logitboosting} proposed  LogitBoost algorithm by employing the additive model into logistic regression problem. If the weighted least square regression is taken as the weak learner, LogitBoost will add one feature at a time in boosting iteration, which avoids matrix inversion. Logitboost also keeps the interpretability since it models the half log-odds by a linear combination of selected features.  \cite{RN50} first combined  LogitBoost and decision tree together and designed a tree model called Logistic Tree Model(LMT). Similarly, \cite{ZHENG20124428} proposed ProbitBoost which fits the GLM with Probit link. In practice, GLM with probit link and logistic link have similar performance under most contexts. While in the theoretical point of view, \cite{ZHENG20124428} pointed out that the re-weighting function of LogitBoost will give a wrongly classified example a small weight making it difficult to correct the error in the later iterations. In ProbitBoost, the re-weighting step will assign larger weight for mis-classified examples, which implies ProbitBoost has better self-correction ability. In addition, \cite{logitboosting} required cross-validation to determine optimal LogitBoost iteration steps at each inner and terminal node when constructing a LMT. Due to the nature of cross-validation, we will sacrifice part of training data in order to determine a hyperparameter. When data size is relatively small, the benefit of cross-validation may be overturned by the loss of effective data. Based on those observations, we'd like to introduce our novel base tree learner named  Probit Model Tree(PMT) which combines ProbitBoost and decision tree together. By employing PMT as base learner in our bagging-boosting structure, we can further show that there is no need to control the iteration times in ProbitBoost part. That implies  hyperparameter tuning can be more straightforward and efficient since  cross-validation will be avoided in fitting process of our proposed model. 


The remaining sections of this paper are organized as following schema. In section 2, we introduce some related works for some hybrid bagging-boosting methods.  In section 3, we illustrate the idea of Probit model tree(PMT). After that our new bagging-boosting method named SBPMT will be proposed in section 4, along with some theoretical analysis in section 5. We also compare the performance of SBPMT in  real datasets in section 6. An experiment with simulation data will be demonstrated as well.  Section 7 discusses few characteristics of SBPMT and proposes future directions. The R codes for the implementation of SBPMT, selected real datasets experiments and the simulation are available on Github.\footnote{https://github.com/BBojack/SBPMT}

\section{Related works for bagging-boosting methods}
\label{network}



 Many popular  boosting algorithms such as Stochastic Gradient Boosting (\cite{sgb}), CatBoost (\cite{Ostroumova2017CatBoostUB}), LightGBM (\cite{lightgbm}) and XGBoost (\cite{xgboostchen}) can actually incorporate bagging during the iteration in their implementation. That is, drawing a subsample of training set based on the weights calculated at each iteration of boosting  to reduce the computation cost and avoid noise in data.  In that way, the bagging procedure is dependent upon boosting procedure and performed sequentially. As a result, not whole training set will be utilized during the iteration. It can be dangerous to ignore or pay less attention to part of the training data when we have relatively small size of dataset. And in practice, dosing randomness at each step of boosting forces practitioner to set large number of iteration times to make sure the generated classifier is stable and strong enough. As a result, the final model involves extremely large number of weak learners and the model interpretation becomes nearly impossible.
 
 \cite{brf} proposed Boosted Random Forest which introduces boosting into random forest. More specifically, they sequentially added a decision tree in the forest by AdaBoost. As a result, less number of trees are required to maintain discrimination performance. They constructed a decision tree by random sampling of features and splitting value to achieve the high generality of the boosted random forest. But the price paid is the loss of interpretation. It becomes even harder to measure the variance importance due to these two layers of randomness.

 Recently,  \cite{onestepbrf} proposed One-Step Boosted Forest for regression problem. They also gave a variance estimate of Boosted Forest which makes the statistical inference of ensemble models possible.  Their empirical results imply that taking independent subsamples to construct ensemble model will generate better prediction than repeating the same subsamples. Although their work is mainly about regression problem, that observation motivates us to build a classifier by taking different possible subsamples, which is exactly what subagging means.

\section{Probit Model Tree}
\label{mv-heuristic}

In this section, we will give a brief introduction of the idea of ProbitBoost (\cite{ZHENG20124428}) which is the core part of our algorithm. Next, the whole procedure of PMT algorithm will be presented, which combines the ordinary  classification decision tree with ProbitBoost model at each node of a decision tree.

\subsection{ProbitBoost for Binary class}

Suppose we have a training set with $n$ observations $\{(\mathbf{x}_{i},Y_{i})_{i=1,...,n}\}$ where each item is associated with class $Y_{i}\in \{-1,1\}$. $\mathbf{x}_{i}$ is the feature vector of $i$-th observation and we assume that each observation has dimension $p$.

In ordinary probit regression model (binary classification problem), we model the posterior class probability $P(Y=1|X=\mathbf{x}_{i})$ of $i$-th observation by linking additive linear combination of features with the normal cdf (cumulative distribution function) as follows:
\begin{equation}
\label{eq1}
    P(Y=1|X=x_{i})=\Phi(\sum_{j=1}^{p}\beta_{j}x_{ij})=\Phi(\beta^{T}\mathbf{x}_{i}) , \quad j=1,...,p
\end{equation}
where $p$ is the dimension of feature space , $\Phi(\cdot)$  is the cdf of normal distribution and $x_{ij}$ represents the $j$-th feature of $i$-th observation. The main assumption is the linear form of  features at the right hand side of \eqref{eq1}. More generally, we can take any functional forms of the input feature space to capture  complicated patterns in a given dataset.

As a result, the generalized additive form can be written as
\begin{equation}
    F_{M}(\mathbf{x})=\sum_{m=1}^{M}f_{m}(\mathbf{x})
\end{equation}
for each m,  $f_{m}(\mathbf{x})$ is a $R^{p} \to R$ function and the capital $M$ is the total number components of the general additive model, which makes it possible to view the boosting procedure as a forward stagewise  fitting algorithm of additive models. In many contexts, functions $f_{m}(\mathbf{x})$ are taken to be some simple basis functions (mostly linear) so that the fitting process is tractable and easy to interpret. 

We now propose population probit risk function as following:
\[
P(f(\mathbf{x}))=-E[log(\Phi(Yf(\mathbf{x})))]
\]
and the empirical version is:
\[
\quad P_{n}(f(\mathbf{x}))=-\frac{\sum_{i=1}^{n}log(\Phi(Y_{i}f(\mathbf{x_{i}}))}{n}
\],
where $Y_{i}$ takes the value in $\{-1,1 \}$.

Like other risk functions such as exponential risk and Logit risk, Probit risk function is convex and second-order differentiable.

Following the structure of ProbitBoost proposed by  \cite{ZHENG20124428}, we give a brief derivation of ProbitBoost algorithm under Newton-Raphson framework and Probit risk function. Reader can find similar details in their work.

Suppose we are minimizing the populational probit risk function $P(f(\mathbf{x}))=-E[log(\Phi(Yf(\mathbf{x})))]$, then the fitting process proceeds by the Newton-Raphson iterative equation as follows:
\begin{equation}
    f^{(m+1)}(\mathbf{x})=f^{(m)}(\mathbf{x})-H^{-1}(f^{(m)}(\mathbf{x}))D(f^{(m)}(\mathbf{x}))
\end{equation}
where $f^{(m)}(\mathbf{x})$ is the fitted function at $m$-th step. This setting coincides with the idea of forward stagewise additive model which makes the implementation of algorithm convenient. Denote $D(), H()$ be the first and second order derivative of expected log-likelihood function , respectively.

Under appropriate conditions, we can find out the gradient (first order derivative) of $P(f(\mathbf{x}))$ w.r.t $f(\mathbf{x}))$ is :
\begin{equation}
\begin{split}
    D(f)=-E\bigg[  \frac{Y\phi(f)}{\Phi(Yf)}  \bigg|\mathbf{x}\bigg]
\end{split}
\end{equation}
Similarly, the Hessian( second order derivative) of $P(f(\mathbf{x}))$ is calculated as
\begin{equation}
\begin{split}
    H(f)=E\bigg[  \frac{\phi(f)G_{Y}(f)}{\Phi^{2}(Yf)}  \bigg|\mathbf{x}\bigg]=E[h(f) |\mathbf{x}]
\end{split}
\end{equation}
where $G_{Y}(f)=Yf\Phi[Yf]-\phi(Yf)$.

Then the Newton-Rahpson iteration process can be written as
\begin{equation}
\label{eq7}
    \begin{split}
         f^{(m+1)}(\mathbf{x})&=f^{(m)}(\mathbf{x})-H^{-1}(f^{(m)}(\mathbf{x}))D(f^{(m)}(\mathbf{x}))\\
         &=f^{(m)}(\mathbf{x})-\frac{1}{E[h(f^{(m)})|\mathbf{x}]}\bigg(-E\bigg[ \frac{Y\phi(f^{m})}{\Phi(Yf^{m})}  \bigg|\mathbf{x} \bigg]\bigg) \\
         &=f^{(m)}(\mathbf{x})+E_{h}\bigg[ \frac{Y\Phi(Yf^{(m)})}{G_{Y}(f^{(m)})}   \bigg|\mathbf{x}\bigg]
    \end{split}
\end{equation}
where $E_{W}(\cdot |\mathbf{x})$ is the notation for weighted conditional expectation defined in \cite{Logitboost}, which means 
\begin{equation}
\label{eq8}
   E_{w}[g(\mathbf{x},y)|\mathbf{x}]=\frac{E[w(\mathbf{x},y)g(\mathbf{x},y)|\mathbf{x}]}{E[w(\mathbf{x},y)|\mathbf{x}]} \quad  \text{with} \quad w(\mathbf{x},y)>0 \quad \forall x \forall y. 
\end{equation}
where $g(\mathbf{x},y)$ is an appropriate function with finite weighted conditional expectation as defined in \eqref{eq8}.
In the last equation of \eqref{eq7}, the weight is $h(f)$. We can show that $h(f)$ is positive all the time so it makes sense to use $h(f)$ as the weight function.

The the sample version of ProbitBoost is illustrated in Algorithm \ref{algo-1} below:

\begin{algorithm}[H]
\SetAlgoLined

 Initialization: Total iteration number $M$; training data $\{(\mathbf{x_{i}},Y_{i}),i=1,...,n\}; Y_{i}\in \{-1,1\} $;
 
   Start with weights $w_{i}=1/n,i=1,2,...,n$, $f^{0}(\mathbf{x})=0$ and class probability $p(\mathbf{x_{i}})=\frac{1}{2}$\;
  
 \For{$m=1,2,...,M$}{
 
 (a) Compute the working response $\mathbf{z}=(z_{1},...,z_{n})$  as 
 
 \[
 z_{i}=\frac{(Y_{i}\Phi(f^{m-1}(x_{i})))}{G_{Y_{i}}(f^{m-1}(x_{i}))}
 \]
 
 for $i=1,2,...,n$\;
 
  (b) Compute the weights $w$  as 
 
 \[
 w_{i}=\frac{\phi(f^{m-1}(x_{i}))G_{Y_{i}}(f^{m-1}(x_{i}))}{\Phi^{2}[Y_{i}f^{m-1}(x_{i})]}
 \]
 
 for $i=1,2,...,n$\;
 
 (c) Fit the current working response $\mathbf{z}$ by a weak learner $g^{m}(\mathbf{x})$ with weights $w_{i}$, $i=1,2,...,n$.\;
 
 (d) Update $ f^{m}(\mathbf{x})=f^{m-1}(\mathbf{x})+g^{m}(\mathbf{x})$

 }
 \KwResult{Output the classifier $sign[f^{M}(\mathbf{x})]$ }
 \caption{ProbitBoost (Binary classification)}
 \label{algo-1}
\end{algorithm}

In next section we will give a specific example of weak learner $g^{m}(\mathbf{x})$.

\subsection{Weighted least square regressor as weak learner}

In step (c) of Algorithm \ref{algo-1}, we need to find a optimal weak learner so that the probit risk function decreases the most. This idea is a kind of coordinate descent method in optimization. Weighted least square regression or weighted decision tree are two main options for the weak learner. \cite{reg_stump} applied regression stump in boosting. Similarly, \cite{ZHENG20124428}  used regression stump in their ProbitBoost model and found that regression stump performs better in many cancer and gene expression classification datasets. However, in tabular datasets we found that regression stump has its own pitfalls in tree models. The criterion of choosing best feature also deserves discussing. \cite{betaboost} selected the attribute by the largest increment of sample log-likelihood function. 

Of course the weak learner can be non-linear. For instance, we can select feature by kernel regression. However, one main drawback of using non-linear weak learner as weak learner is, we may lose the interpretability of the boosted classifier by introducing complicated structure. Meanwhile, especially for kernel regression, the relationship between transform map and kernel is not bijective. Despite the flexibility of choosing a particular kernel, it's hard to find the true transform map which makes interpretation more difficult.

We decide to apply weighted least square (WLS) regression as weak learner in our ProbitBoost setting which makes interpretation possible and the time efficiency is guaranteed by the formula of simple WLS without involving matrix inversion. Note that the final fitted function $f^{M}(\mathbf{x})$ will have the form of linear combination of features. We now summarize details in step(c) at $m$-th iteration:


    \begin{itemize}
        \item c(1) Perform weighted least square regression for each feature  with weights $w_{i}$ to obtain:
        \[
(\hat{a}_{j},\hat{b}_{j})=\operatorname*{arg\,min}_{a,b} \sum_{i=1}^{n}w_{i}(z_{i}-ax_{j,i}-b)^{2} \quad \text{for $j\in\{1,...,p\}$}
\]
        \item c(2)  Obtain the index $s(m)$ of optimal weak learner (optimal direction/feature)  which has the smallest  weighted square error:
        \[
s(m)=\operatorname*{arg\,min}_{j\in \{1,...,p\}} \sum_{i=1}^{n}w_{i}(z_{i}-\hat{a}_{j}x_{j,i}-\hat{b}_{j})^{2}
\]
        \item c(3)  Set $g^{m}(\mathbf{x})=\hat{a}_{s(m)}\mathbf{x}+\hat{b}_{s(m)}$
    \end{itemize}

where vector $\mathbf{w}=\frac{\phi(Yf^{m-1})[Yf^{m-1}\Phi(Yf^{m-1})+\phi(Yf^{m-1})]}{\Phi^{2}(Yf^{m-1})}, \mathbf{z}=\frac{\phi(Yf^{m-1})}{Yf^{m-1}\Phi(Yf^{m-1})+\phi(Yf^{m-1})}$ and $\Tilde{\mathbf{x}}=(1,\mathbf{x})$. 

Eventually, the fitted additive probit regression function after $M$ iterations can  now be represented as $f^{M}(\mathbf{x})=\Tilde{\mathbf{x}}^{T}(\sum_{i=1}^{M}\beta_{i})=\Tilde{\mathbf{x}}^{T}\mathbf{\beta}$ where $\beta_{i}\in R^{p+1}$ and $\beta_{i,1}=b_{s(i)},\beta_{i,s(i)+1}=a_{s(i)},\beta_{i,j}=0$ for $j\neq 1,s(i)+1$.

\subsection{Growing the  probit model tree}

For simplicity and efficiency, we follow the spirit of CART algorithm to build a decision tree first and and fit a probit model with ProbitBoost algorithm described in section 3.1 at each terminal node for a tree. We don't take specific regularizations to control the complexity of models as in section 5 we will see there is actually no need to control the iteration times for ProbitBoost. More specifically, regardless of time costing, large number of iterations for ProbitBoost not only benefits the empirical probit risk but also the prediction accuracy. The stopping rules includes min\_leaf\_size, max\_depth and max\_iterations of ProbitBoost. In reality, user can set those parameters by themselves based on the specific condition of experiments. 

\subsection{The model}

As specified in section 3.1-3.3, a probit model tree consists of two parts. The global structure is a decision tree with CART type. And the local structure, which means a model at each terminal node of such a s tree, is a probit regression model fitted by ProbitBoost. 

Suppose $\{S_{a}\}_{a\in A}$ is a set of partitions determined by the CART decision tree structure, where $A$ is the total number of partitions. Each partition $S_a$ has an associated additive ProbitBoost model   $G_{a}$  generated by Algorithm 1. Then  the whole probit model tree is represented by 
   \begin{equation}
          G(\mathbf{x})=\sum_{a\in A}sign[G_{a}(\mathbf{x})]\cdot I(\mathbf{x}\in S_{a})
   \end{equation}
   
where $I(\mathbf{x}\in S_{a})=1$ if $\mathbf{x}\in S_{a}$ and 0 otherwise, $\mathbf{x}$ is a data point.

As we take the weighted least square regressor as the weak learner, we can further expand each $G_{a}(\mathbf{x}) $ as following:

\[
G_{a}(\mathbf{x})=\mathbf{x}^{T}\beta^{a}
\]
where $\beta^{a}$ is the vector of coefficients fitted in accordance with the procedure described in section 3.2.

Now the most detailed form of a probit model tree is written as
\begin{equation}
\begin{split}
        G(\mathbf{x})&=\sum_{a \in A}sign[\mathbf{x}^{T}\beta^{a}]\cdot I(\mathbf{x}\in S_{a})\\
        &=sign[\sum_{a\in A}\mathbf{x}^{T}\beta^{a}\cdot I(\mathbf{x}\in S_{a})]
\end{split}
\end{equation}
The second equality holds because of the fact that  partitions $\{ S_{a}  \}_{a \in A}$ are disjoint with each other.

Algorithm \ref{algo-pmt} summarizes the whole procedure for fitting a PMT.

\begin{algorithm}[H]
\SetAlgoLined

     Initialization: Number of ProbitBoost iteration $B$ ; Depth of PMT $l$; training data $\mathcal{D}_{X}=\{(\mathbf{x}_{i},Y_{i}),i=1,...,n\}$ with $\mathbf{x}_{i} \in \mathbb{R}^{p}$ and $Y_{i}\in \{-1,1\} $.

($a$) Fit a CART tree $T$ with maximal depth $l$ and obtain the corresponding partition $\{S_{a}\}_{a\in A}$

($b$) \For{$a \in A$}{
     Perform ProbitBoost with iteration number $B$ for data points lying in the partition $S_{a}$ and obtain 
 the classifier $G_{a}(\mathbf{x})=\mathbf{x}^{T}\beta^{a}$.

}  
 \KwResult{Output the final classifier $G(\mathbf{x})=sign[\sum_{a\in A}\mathbf{x}^{T}\beta^{a}\cdot I(\mathbf{x}\in S_{a})]$ }
 \caption{Probit Model Trees (PMT) }
 \label{algo-pmt}
\end{algorithm}

\section{Subagging Boosted  Probit Model Trees (SBPMT)}

Having introducing the idea of Probit Model Tree, in this section, we will give a new hybrid bagging-boosting algorithm based on PMTs. We consider the binary case first. As mentioned before, intuitively, bagging-boosting algorithm could improve the mean squared error simultaneously in two directions. Instead of trading off variance and bias by regularization, bagging-boosting type algorithm could reduce both of them at the same time. For classification problems, such observation encourages us to implement bagging-boosting type classifiers to reduce the misclassification error.

We first give a brief introduction of subagging. As a variant of the bagging procedure, subagging, which is first proposed by \cite{subagging}, is a sobriquet for "subsample aggregating" where the bootstrap resampling is replaced with subsampling.  A learner or predictor $F_{n,m}(\mathbf{x})$ with subsampling size $m$ is aggregated as follows:
\begin{equation}
    F_{n,m}(\mathbf{x})=\frac{1}{\binom{n}{m}}\sum_{(i_{1},...,i_{m})\in l}h_{m}(L_{i_{1}},...,L_{i_{m}})(\mathbf{x})
\end{equation}
where $l$ is the set of all $m$-tuples whose elements in $\{1,2,...,n\}$ are all distinct and $h_{m}(L_{i_{1}},...,L_{i_{m}})$ is a predictor based on a subsampling with indices $(i_{1},...,i_{m})$.
But in practice, it's time consuming to evaluate all $\binom{n}{m}$ subagged predictors. One way to avoid heavy computation burden is to approximate (11) by  random sampling . That is, we randomly sample without replacement from original dataset for $M$ times and averaging over the predictors based on random samplings. The subagged predictor becomes:
\begin{equation}
    F_{n,m,M}(\mathbf{x})=\frac{1}{M}\sum_{i}^{M}h_{m}(L_{i_{1}},...,L_{i_{m}})(\mathbf{x})=\frac{1}{M}\sum_{i}^{M}h_{m}(L_{D_{i}})(\mathbf{x})
\end{equation}
where $(L_{i_{1}},...,L_{i_{m}})=L_{D_{i}}$. We will take this type of subagging in our algorithm.  Note that $\mathcal{D}=(D_1,...,D_{M})\in \{ D\subset [n]:|D|=m\}^{M}$ is a sequence of $M$ subsets of $\{1,2,...,n\}$ with cardinality $m$ and this sequence $\mathcal{D}$ is called the design \cite{DesignBasedIncomUstat}.  It's easy to see $F_{n,m,M}(\mathbf{x})$ is an incomplete $U$-statistic with order $m$ for most learning algorithms. For simplicity, we take subsampling size $m=\lfloor \alpha n \rfloor$ with $0<\alpha <1$.
\begin{figure}[h]
\includegraphics[width=0.9\textwidth]{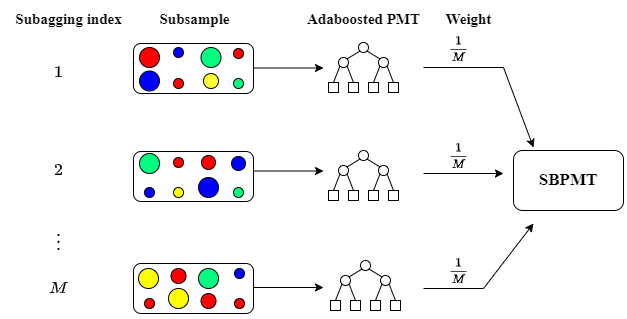}
\caption{Working flow of the SBPMT algorithm}
\label{fig-sb}
\end{figure}
The main idea of SBPMT is to perform AdaBoost at each subagged sampling. PMTs will be used as base learner in Adaboost. Note that the individual weights generated in AdaBoost are employed in both CART algorithm and ProbitBoost for fitting a PMT. Figure \ref{fig-sb} illustrates the working flow of SBPMT.

Algorithm \ref{algo-2} summarizes the details of the SBPMT algorithm. At step ($b$) we relabel the indices of subsampled dataset from 1 to $\lfloor \alpha n \rfloor$ for the sake of simplicity. 

\begin{algorithm}[H]
\SetAlgoLined

     Initialization: Number of subagging $M$; Number of AdaBoost iteration $T$ ;Number of ProbitBoost iteration $B$ ;subagging ratio $\alpha$; Depth of PMT $l$; training data $\mathcal{D}_{X}=\{(\mathbf{x}_{i},Y_{i}),i=1,...,n\}; Y_{i}\in \{-1,1\} $;
  
 \For{$k=1,2,...,M$}{
 
 ($a$) Randomly sample from the original training data $\mathcal{D}_{X}$ without replacement. The size $m$ of random sampled dataset $\mathcal{D}_{X}^{k}$  is set to $\lfloor \alpha n \rfloor$.   Initialize weights $w_{1}(i)=1/m,i=1,2,...,m$
 
  ($b$) On $\mathcal{D}_{X}^{k}$, \For{$t=1,2,...,T$}{
 
  ($b_{1}$) Fit a PMT classifier $G_{t}(\mathbf{x})$ to the subagged dataset $\mathcal{D}_{X}^{k}$ using weights $w_{t}(i)$.
     
  ($b_{2}$) Compute $\alpha_{t}=\frac{1}{2}ln\frac{1-err_{t}}{err_{t}}$ where $err_{t}=\sum_{i=1}^{m}w_{t}(i)I(Y_{i}\neq G_{t}(\mathbf{x}_{i}))$
  
  ($b_{3}$) Update weights: $w_{t+1}(i)=\frac{w_{t}(i)\cdot e^{\alpha_{t}\cdot I(Y_{i}\neq G_{t}(\mathbf{x}_{i})}}{Z_{t}}$ where $Z_{t}$ is a normalization factor so that $w_{t+1}(i)$ becomes a distribution, $i=1,2,...,m$.
 }

  ($c$)Output $f_{Z_k,\lfloor \alpha n \rfloor,T}(\mathbf{x})=sign[\sum_{t=1}^{T}\alpha_{t}G_{t}(\mathbf{x})]$, where $f_{Z_{k},\lfloor \alpha n\rfloor,T}$ be fitted the AdaBoosted PMT with $T$ iterations fitted by $k$ randome sampled dataset $\mathcal{D}_{X}^{k}$.

 }
 \KwResult{Output the final classifier $F(\mathbf{x})=sign[\frac{1}{M}\sum_{k=1}^{M}f_{Z_k,\lfloor \alpha n \rfloor,T}(\mathbf{x})]$ }
 \caption{Subagging Boosted Probit Model Trees (SBPMT)}
 \label{algo-2}
\end{algorithm}

\subsection{Multi-class version}

SBPMT can be generalized to multi-class problems by using multi-class versions of AdaBoost and Probit Model Tree. \cite{ZHENG20124428} proposed Multi-class gradient ProbitBoost by maximizing the multiclass log-likelihood function for probit regression.  They also sketched one-versus-all approach to fit a multi-class ProbitBoost classifier. The idea is to reduce a multi-class problem into $J$ binary classification problems where $J$ is the number of classes. Due to the simplicity of one-versus-all strategy, we will fit a PMT by one-versus-all strategy instead of fitting multi-class ProbitBoost at each terminal node of a CART tree. 

Having a multi-class version of PMT, we incorporate it in a multi-class version  AdaBoost  named SAMME which is proposed by \cite{Hastie2009MulticlassA}. SAMME extends the AdaBoost to the multi-class case without reducing it to multiple binary problems. It is equivalent to forward stagewise additive modeling with a multi-class exponential loss function. We choose this multi-class version of AdaBoost because of its clean implementation. In practice, people can try many other variants of multi-class AdaBoost.


We now sketch the multi-class version of SBPMT as following:

\begin{algorithm}[H]
\SetAlgoLined

     Initialization: Number of ProbitBoost iteration $B$ ; Depth of PMT $l$; training data $\mathcal{D}_{X}=\{(\mathbf{x}_{i},Y_{i}),i=1,...,n\}$ with $\mathbf{x}_{i} \in \mathbb{R}^{p}$ and $Y_{i}\in \{1,2,...,J\} $; Define $y_{ij}=I(Y_{i}=J)$ for $i=1,2,...,n$ and $j=1,2,...,J$.
  
 \For{$k=1,2,...,M$}{
 
 ($a$) Randomly sample from the original training data $\mathcal{D}_{X}$ without replacement. The size $m$ of random sampled dataset $\mathcal{D}_{X}^{k}$  is set to $\lfloor \alpha n \rfloor$. Initialize weights $w_{1}(i)=1/m,i=1,2,...,m$.
 
  ($b$) On $\mathcal{D}_{X}^{k}$, \For{$t=1,2,...,T$}{
 
  ($b_{1}$) Fit a multi-class PMT classifier $G_{t}(\mathbf{x})$ by Algorithm \ref{algo-pmt} using weights $w_{t}(i)$.

  ($b_{2}$) Compute $\alpha_{t}=\frac{1}{2}ln\frac{1-err_{t}}{err_{t}}+log(J-1)$ where $err_{t}=\sum_{i=1}^{m}w_{t}(i)I(Y_{i}\neq G_{t}(\mathbf{x}_{i}))$
  
  ($b_{3}$)Update weights: $w_{t+1}(i)=\frac{w_{t}(i)\cdot e^{\alpha_{t}\cdot I(Y_{i}\neq G_{t}(\mathbf{x}_{i})}}{Z_{t}}$ where $Z_{t}$ is a normalization factor so that $w_{t+1}(i)$ becomes a distribution, $i=1,2,...,m$
 }

  ($c$)Output $f_{Z_k,\lfloor \alpha n \rfloor,T}(\mathbf{x})=\underset{j \in \{1,2,...,J\}}{\mathrm{argmax}} \sum_{t=1}^{T}\alpha_{t}I(G_{t}(\mathbf{x})=j)$.

 }
  
 \KwResult{ Output the final classifier $F(\mathbf{x})=\underset{j \in \{1,2,...,J\}}{\mathrm{argmax}} (\sum_{k=1}^{M}I(f_{Z_k,\lfloor \alpha n \rfloor,T}(\mathbf{x})=J)) $ }
 \caption{Multi-class SBPMT}
 \label{algo-3}
\end{algorithm}

\section{Theoretical analysis of SBPMT}
\label{analysis}

In this section, we will show the Bayesian (weak) consistency w.r.t SBPMT under some reasonable assumptions. A simple but useful fact that the consistency of individual classifier is preserved by majority voting, which is applied in the ensembling step of SBPMT,  has been shown in \cite{GAO2022103788} and \cite{consisrf}. For completeness of our work, we  gave a more elementary proof. After that, we proved the consistency of AdaBoosted PMT using the results from \cite{bartlett07b}. 



In order to achieve decent performance of SBPMT in real datasets, we derived an upper bound for the  generalization error  of SBPMT in section 5.3, which illustrates the impact of each hyperparameter on the generalization error of SBPMT. 

\subsection{Setup and notation}

Again, for simplicity, we consider a binary classification problem. Let $\mathcal{X}$ be the feature(measurable) space and $\mathcal{Y}=\{-1,1\}$ be the set of labels. Suppose we observe a training data $S_n=\{(X_i,Y_i)\}_{i=1}^{n}$ of i.i.d observations from an unknown distribution $\mathcal{P}$. We now construct a classifier $g_n:\mathcal{X}\to \mathcal{Y}$ based on this training set. Then the probability of error of $g_n$ is given by 

\[
L_{n}=L(g_n)=P(g_{n}(X)\neq Y|S_n)
\]
which is also called empirical risk. 

In theory \cite{Devroye1996APT}, the expected risk of a given classifier $g:\mathcal{X}\to \mathcal{Y}$ is defined as

\[
L(g)=E[I(g(X)\neq Y)]
\]

The optimal expected risk (or Bayes risk) $L^{*}$ is the  infimum over all possible classifiers g:

\[
L^{*}=\inf_{g}L(g)=E[min(\eta(X),1-\eta(X))]
\]

where $\eta(\cdot)$ is a conditional probability $\eta(x)=P(Y=1|X=x)$.

It's well known that the infimum above can be achieved by the Bayes classifier  $g^{*}(x)=g(2\eta(x)-1)$:

\[
g(x)=
    \begin{cases}
        1, & x>0\\
        -1, & x \leq 0
    \end{cases}
\]

Note that $g^{*}$ relies on the distribution $\mathcal{P}$ of $(X,Y)$ and expectation is taken over that unknown distribution $\mathcal{P}$. 
When the sample size increases, we can have a sequence of classifiers $\{g_{n},n\geq 1\}$, which is denoted as a rule.  Intuitively, we want a classifier have small empirical risk and be close to the Bayes risk if we have enough data. Statistically speaking, a rule is consistent under a certain distribution $\mathcal{P}$ if:

\[
\lim_{n\to \infty}E[L(g_{n})]=L^{*}
\]
\newtheorem*{remmark}{Remark}
\begin{remmark}
Since $L_{n}$ is a random variable which is bounded, convergence of the expected value is equivalent to the convergence in probability, which means that 
\[
\lim_{n\to \infty}P(|L(g_{n})-L^{*}|>\epsilon)=0
\]
for all $\epsilon>0$.
\end{remmark}

For the subagging classifier which takes a majority vote over  classifiers built upon $M$ random subsamples with subsampling ratio $\alpha$, we can define $f_{M,n,T}(\mathbf{x})$ as following:
\begin{equation}
\label{eq-voting}
    f_{M,n,T}(\mathbf{x})=
    \begin{cases}
        1, & \frac{1}{M}\sum_{i=1}^{M}f_{Z_{i},\lfloor \alpha n\rfloor,T}(\mathbf{x})> 0\\
        -1, & \text{otherwise}
    \end{cases}
\end{equation}
where $(Z_1,...,Z_M)$ are i.i.d random variables (let $\mathcal{Z}$ be their common distribution) representing the randomization introduced by subsampling. In our case, $f_{Z_{i},\lfloor \alpha n\rfloor,T}$ is the AdaBoosted PMT with $T$ iterations, which is constructed by $i$-th subagged sample. 

Next, we follow the settings in \cite{bartlett07b}.

Denote  $\mathcal{H}=\{h|h:\mathcal{X}\to \mathcal{Y}=\{-1,1\}\}$  as the hypothesis space and $d_{VC}(\mathcal{H})$  as VC (Vapnik-Chervonenkis) dimension.

The convex hull of $\mathcal{H}$ scaled by $\lambda \geq 0$ is

\[
\mathcal{F}_{\lambda}=\bigg\{ f\bigg| f=\sum_{i=1}^{n}\lambda_{i}h_{i},n\in \mathcal{N}\cup \{0\} ,\lambda_{i}\geq 0,\sum_{i=1}^{n}\lambda_{i}=\lambda ,h_{i}\in \mathcal{H}\bigg\}
\].

Define 
\[
R_{n}(f)=-\frac{1}{n}\sum_{i=1}^{n}e^{-Y_{i}f(X_{i})}\quad \text{and} \quad R(f)=E[e^{-Yf(X)}].
\]
Then the Adaboost procedure can be summarized as follows.

\begin{itemize}
    \item 1. Set $f_{0}\equiv 0$, the number of iterations t and weights $w_{1}(i)=1/n,i=1,2,...,n$.
    \item 2. For $k=0,...,t$, the updating equation is 
    \[f_{k+1}=f_{k}+\alpha_{k}h_{k}\]
    where $f_{k}$ satisfies $R_{n}(f_{k})= \displaystyle \inf_{h\in \mathcal{H},\alpha \in \mathbb{R}}R_{n}(f_{k-1}+\alpha h)$. 
    More specifically, 
    \begin{itemize}
        \item 2(a). Fit a classifier to the training data with weights $w_{i}$ to obtain optimal base learner $h_{k}$
        \item 2(b). Compute $\varepsilon_{k}=\sum_{i=1}^{n}w_{k}(i)I(y_{i}\neq f_{k}(x_{i})$.
        \item 2(c). Compute step size $\alpha_{k}=\frac{1}{2}ln\bigg(\frac{1-\varepsilon_{k}}{\varepsilon_{k}}\bigg)$.
        \item 2(d). Update weights: $w_{k+1}(i)=\frac{w_{k}(i)\cdot e^{\alpha_{k}\cdot I(y_{i}\neq f_{k}(x_{i})}}{Z_{k}}$ where $Z_{k}$ is a normalization factor so that $w_{k+1}(i)$ becomes a distribution.
    \end{itemize}
    \item 3. Return $g\circ f_{t}$ as a final classifier.
\end{itemize}

\begin{remmark}
Note that step 2 is very general in theory. In practice, people  typically choose decision tree as the base learner. In our case, we use PMT as the base learner.
\end{remmark}

In the end, we give a definition of  Gateaux derivative of functions between Banach spaces $\mathcal{A}$ and $\mathcal{B}$. Let $U \subset \mathcal{A}$ be open, and $F: U \to \mathcal{B}$. The Gateaux differential $F'(f;h)$ of $F$ at $f\in U$ in the direction $h \in \mathcal{A}$ is defined as:

\[
F'(f;h)=\frac{\partial F(f+\lambda h)}{\partial \lambda}\bigg|_{\lambda=0}=\lim_{\lambda \to 0}\frac{F(f+\lambda h)-F(f)}{\lambda}
\]

The second derivative $F''(f;h)$ is defined similarly.  For example, let $f,\mathbf{x}$ be vectors in an inner product space $\mathcal{V}$. Denote $F(f)=x^{T}f$, then
\[
F'(f;h)=\lim_{\lambda \to 0}\frac{\mathbf{x}^{T}f+\lambda \mathbf{x}^{T}h-\mathbf{x}^{T}f}{\lambda}=\mathbf{x}^{T}h
\]

\subsection{Consistency of SBPMT}

We first give a lemma showing the connection of consistency of individual classifier and the voting classifier. Similar results have been proved in \cite{consisrf}, \cite{GAO2022103788}. 

\newtheorem{lemmma}{Lemma}

\begin{lemmma}
\label{lemma-1}
Suppose the number of subbagged classifiers is $M$ and the sequence of each of them is consistent for the same distribution $\mathcal{P}$, i.e $\{f_{Z_{i},\lfloor \alpha n\rfloor,T}(\mathbf{x})\}_{n=1}^{\infty}$ is consistent for each $i\in \{1,2,...,m\}$, then the voting classifier $f_{M,n,T}(\mathbf{x})$ defined in \eqref{eq-voting} is also consistent.
\end{lemmma}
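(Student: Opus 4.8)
The plan is to reduce the statement to a comparison of \emph{excess risks} and then exploit the fact that a majority vote can only err when a majority of its constituents err. The starting point is the classical pointwise formula for the excess risk of an arbitrary classifier $g$ (see \cite{Devroye1996APT}): since the Bayes rule is $g^{*}(x)=\mathrm{sign}(2\eta(x)-1)$, one has
\[
L(g)-L^{*}=E\big[\,|2\eta(X)-1|\cdot I(g(X)\neq g^{*}(X))\,\big].
\]
This identity is the crux of the argument. Consistency, $E[L(g_{n})]\to L^{*}$, is \emph{not} the same as $P(g_{n}(X)\neq g^{*}(X))\to 0$, because disagreeing with $g^{*}$ on the boundary set $\{\eta=1/2\}$ costs nothing in risk. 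The weight $|2\eta(X)-1|$ is exactly what discounts that harmless disagreement, and it is what makes the whole comparison go through.

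Next I would establish the pointwise domination of the vote by its members. Fix $x$ and abbreviate $f_{i}=f_{Z_{i},\lfloor\alpha n\rfloor,T}$. If the majority vote $f_{M,n,T}(x)$ disagrees with $g^{*}(x)$, then, accounting for the tie-breaking convention in \eqref{eq-voting}, at least $\lceil M/2\rceil$ of the $M$ individual classifiers must also disagree with $g^{*}(x)$ (a short case check on $\mathrm{sign}\sum_{i} f_{i}(x)$ confirms this for both values of $g^{*}(x)$). Consequently,
\[
I(f_{M,n,T}(x)\neq g^{*}(x))\;\leq\;\frac{2}{M}\sum_{i=1}^{M}I(f_{i}(x)\neq g^{*}(x)),
\]
since whenever the left side equals $1$ the sum on the right is at least $M/2$. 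Multiplying both sides by $|2\eta(x)-1|$, taking expectation over $X$, and applying the excess-risk identity term by term yields
\[
L(f_{M,n,T})-L^{*}\;\leq\;\frac{2}{M}\sum_{i=1}^{M}\big(L(f_{i})-L^{*}\big).
\]

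Finally I would take expectation over the training sample and the subsampling randomizations $Z_{1},\dots,Z_{M}$. Because the $Z_{i}$ are identically distributed, every term $E[L(f_{Z_{i},\lfloor\alpha n\rfloor,T})]-L^{*}$ equals a common nonnegative quantity that tends to $0$ as $n\to\infty$ by the assumed consistency of each individual sequence; since $M$ is fixed and finite, the averaged right-hand side tends to $0$ as well. Combined with the trivial lower bound $E[L(f_{M,n,T})]\geq L^{*}$, this forces $E[L(f_{M,n,T})]\to L^{*}$, so the voting classifier is consistent. I expect the only genuinely delicate point to be the pointwise domination step: one must handle the tie case carefully to justify the factor $2/M$, and one must resist the temptation to argue through $P(f_{i}\neq g^{*})$ directly, which fails precisely on the decision boundary. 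Everything else is routine bookkeeping.
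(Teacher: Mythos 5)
Your proposal is correct and follows essentially the same route as the paper's proof: both rest on the excess-risk identity $L(g)-L^{*}=E\big[|2\eta(X)-1|\,I(g(X)\neq g^{*}(X))\big]$ combined with the observation that a majority-vote error forces at least half of the $M$ members to err, yielding the factor $2/M$. Your pointwise indicator domination is just the integrated form of the paper's Markov-inequality step (and in fact reads slightly cleaner, since it avoids the paper's ``WLOG $\eta>1/2$'' reduction and its pointwise deduction that $P_{\mathcal{Z}}(f_{S_n,Z_i}(X)=-1)\to 0$), but the underlying argument is the same.
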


If the iteration times in Adaboost depends on sample size, \cite{bartlett07b} have shown that Adaboost is consistent if it is stopped at certain step. We restate their result in Theorem \ref{theorem-1}:

\newtheorem{theoremm}{Theorem}
\begin{theoremm}[\cite{bartlett07b}]
   \label{theorem-1}
    Assume $d_{VC}(\mathcal{H})$ is finite and $L^{*}>0$, 
    \[
    \lim_{\lambda \to \infty}\inf_{f\in \mathcal{F}_{\lambda}R(f)}=R^{*}
    \], where $\mathcal{H}$ is the base classifier space, $R^{*}=\inf R(f)$ over all measureable functions,
    $t_{n}\to \infty$, and $t_{n}=O(n^{\nu})$ for $\nu <1$. Then AdaBoost stopped at step $t_{n}$ returns a sequence of classifiers almost surely satisfying $L(g(f_{t_{n}}))\to L^{*}$.
\end{theoremm}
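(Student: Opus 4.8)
The plan is to follow the surrogate-risk scheme of \cite{bartlett07b}: first show that the sequence $f_{t_n}$ drives the excess \emph{exponential} risk $R(f_{t_n})-R^{*}$ to zero almost surely, and then transfer this to the $0$--$1$ risk using the fact that the exponential loss is classification-calibrated. The starting observation is that AdaBoost is greedy coordinate descent on the empirical exponential risk $R_n$ over the base class $\mathcal{H}$, so that after $t_n$ rounds the iterate lies in a scaled convex hull $f_{t_n}\in\mathcal{F}_{\lambda_n}$, where $\lambda_n$ is the accumulated $\ell_1$ weight. I would then decompose
\begin{align*}
R(f_{t_n}) - R^* &= \underbrace{\bigl(R(f_{t_n}) - R_n(f_{t_n})\bigr)}_{\text{estimation}} + \underbrace{\Bigl(R_n(f_{t_n}) - \inf_{f\in\mathcal{F}_{\lambda_n}} R_n(f)\Bigr)}_{\text{optimization}} \\
&\quad + \underbrace{\Bigl(\inf_{f\in\mathcal{F}_{\lambda_n}} R_n(f) - \inf_{f\in\mathcal{F}_{\lambda_n}} R(f)\Bigr)}_{\text{estimation}} + \underbrace{\Bigl(\inf_{f\in\mathcal{F}_{\lambda_n}} R(f) - R^*\Bigr)}_{\text{approximation}},
\end{align*}
and control each group of terms separately.

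The approximation term vanishes by hypothesis, since $\lim_{\lambda\to\infty}\inf_{f\in\mathcal{F}_\lambda}R(f)=R^{*}$ and $t_n\to\infty$ forces $\lambda_n\to\infty$. For the two estimation terms I would establish a uniform deviation bound for $\sup_{f\in\mathcal{F}_{\lambda_n}}|R(f)-R_n(f)|$: because $d_{VC}(\mathcal{H})$ is finite, the Rademacher complexity of $\mathcal{F}_{\lambda_n}$ scales like $\lambda_n\sqrt{d_{VC}(\mathcal{H})/n}$, and after a Lipschitz/truncation argument for the exponential loss on $\mathcal{F}_{\lambda_n}$ together with a bounded-differences concentration inequality this yields an exponential tail. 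The growth restriction $t_n=O(n^{\nu})$ with $\nu<1$ is precisely what keeps $\lambda_n$ small relative to $n$ so that the deviation tends to $0$, and summability of the tails lets Borel--Cantelli upgrade this to \emph{almost sure} convergence.

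For the optimization term I would invoke the coordinate-descent progress bounds for AdaBoost, which guarantee that after $t_n$ greedy steps the empirical exponential risk is within a vanishing gap of $\inf_{f\in\mathcal{F}_{\lambda_n}}R_n(f)$. Combining the three vanishing groups gives $R(f_{t_n})\to R^{*}$ almost surely. To finish, I would apply the calibration inequality for the exponential loss: there is a nondecreasing convex $\psi$ with $\psi(0)=0$ such that $\psi\bigl(L(g\circ f)-L^{*}\bigr)\le R(f)-R^{*}$ for all measurable $f$, so $R(f_{t_n})\to R^{*}$ immediately yields $L(g(f_{t_n}))\to L^{*}$ (the assumptions $d_{VC}(\mathcal{H})<\infty$ and $L^{*}>0$ are the regularity conditions under which this chain of inequalities closes).

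The main obstacle is the optimization step: one must show that the \emph{greedy} coordinate descent actually realized by AdaBoost — not an idealized exact minimizer over $\mathcal{F}_{\lambda_n}$ — approaches the infimum of $R_n$ over the growing convex hull within $t_n=O(n^{\nu})$ rounds, while simultaneously holding $\lambda_n$ in the regime where the uniform deviation bound still closes. Balancing the optimization rate against the complexity growth, so that the optimization gap and the estimation gap vanish under a single schedule $t_n$, is the delicate quantitative core of the argument, and is where I would rely most heavily on the machinery of \cite{bartlett07b}.
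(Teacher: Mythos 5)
The paper never proves this statement: Theorem \ref{theorem-1} is quoted verbatim from \cite{bartlett07b} and is used purely as an imported ingredient in the proof of Theorem \ref{theorem-2}, so the only meaningful comparison is against the proof in that reference. Your sketch reconstructs that proof's architecture faithfully: view AdaBoost as greedy stagewise minimization of the empirical exponential risk, decompose the excess surrogate risk into optimization, estimation and approximation parts, control the estimation part by uniform deviation bounds over scaled convex hulls whose complexity is tied to $d_{VC}(\mathcal{H})$ (with clipping/truncation to tame the unbounded exponential loss), upgrade to almost-sure convergence by summable tails and Borel--Cantelli, and finally invoke classification calibration of the exponential loss to pass from $R(f_{t_n})\to R^{*}$ to $L(g(f_{t_n}))\to L^{*}$. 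Deferring the delicate balancing of the greedy optimization gap against complexity growth under $t_n=O(n^{\nu})$ to the machinery of \cite{bartlett07b} is reasonable, since that is precisely the content of the cited work.

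Two points in your sketch are misstated and would need repair if you carried the argument out. First, the claim that ``$t_n\to\infty$ forces $\lambda_n\to\infty$'' is unjustified: the accumulated $\ell_1$ weight of the AdaBoost iterate is random and data-dependent, and nothing forces it to diverge. The correct argument does not track the iterate's own norm this way; it compares the iterate against reference functions in \emph{deterministically} growing hulls $\mathcal{F}_{\lambda_n}$ via the numerical-convergence lemma, and separately needs a deterministic \emph{upper} bound on the iterate's norm to place it in a class of controlled complexity. Second, and relatedly, $L^{*}>0$ is not a generic ``regularity condition under which the calibration chain closes'' --- calibration needs no such hypothesis. It is exactly what delivers the missing norm control: since $e^{-yf(x)}\geq \mathbf{1}\{yf(x)\leq 0\}$, one has $R(f)\geq L(g\circ f)$, hence $R^{*}\geq L^{*}>0$; combined with uniform convergence this keeps the empirical exponential risk bounded away from $0$, which keeps each weighted error $\varepsilon_{k}$ bounded away from $0$ and $1$, so each step size $\frac{1}{2}\log\frac{1-\varepsilon_{k}}{\varepsilon_{k}}$ is bounded by a constant and $\|f_{t_n}\|_{1}=O(t_n)=O(n^{\nu})$ --- precisely the deterministic bound your estimation step requires, and the place where the growth condition $\nu<1$ earns its keep. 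Identifying where $L^{*}>0$ enters is the one substantive idea missing from your sketch.
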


If we can show that the hypothsis space $\mathcal{H}$ in SBPMT has finite VC-dimension, consistency of SBPMT follows directly from Lemma \ref{lemma-1} and Theorem \ref{theorem-1} as long as  we stopped adaboost procedure at the step required by \cite{bartlett07b}. This gives rise to one of our main theorems:

\begin{theoremm}[Consistency of SBPMT]
   \label{theorem-2}
   Let $\mathcal{H}$ be the hypothesis space consists of Probit Model Trees, then $d_{VC}(\mathcal{H})<\infty$ and $E[L(g(f_{M,n,T_{n}}))]\to L^{*}$, where $M$ is the number of subagging classifiers, $n$ is the sample size, $T_{n}=O(n^{\nu})$ for $\nu<1$, i.e the final classifier returned by SBPMT is consistent.
\end{theoremm}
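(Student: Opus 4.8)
The plan is to follow the roadmap already sketched in the text: reduce the theorem to (i) verifying that the class $\mathcal{H}$ of Probit Model Trees has finite VC dimension and (ii) checking the remaining hypotheses of Theorem~\ref{theorem-1}, after which the consistency of each AdaBoosted PMT follows from Bartlett--Traskin and the consistency of the full SBPMT voting classifier follows from Lemma~\ref{lemma-1}. So the proof splits cleanly into a complexity bound, a richness check, and an assembly step.

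The heart of the argument, and the step I expect to be the main obstacle, is the bound $d_{VC}(\mathcal{H})<\infty$. The key structural observation is that a PMT of depth $l$ is determined by a bounded amount of structure: at most $2^{l}-1$ internal axis-aligned splits and at most $2^{l}$ leaves, where---crucially---each leaf classifier $\mathrm{sign}[\mathbf{x}^{T}\beta^{a}]$ is a \emph{single} halfspace no matter how many ProbitBoost iterations $B$ are run, since a sum of linear functions is again linear. I would therefore bound the growth function $\Pi_{\mathcal{H}}(n)$ directly. On $n$ points a single axis-aligned split realizes at most $O(pn)$ distinct dichotomies, so the tree topology together with its split thresholds induces only polynomially many (in $n$, of degree depending on $l$ and $p$) distinct cell assignments; conditioned on a fixed partition, the halfspace at each leaf contributes at most $O(n^{p+1})$ labelings of the points falling in that cell, by Sauer--Shelah applied to halfspaces in $\mathbb{R}^{p+1}$. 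Multiplying these polynomial factors over the $\le 2^{l}$ leaves shows $\Pi_{\mathcal{H}}(n)$ is polynomial in $n$, whence $d_{VC}(\mathcal{H})<\infty$ again by Sauer--Shelah. I would emphasize that this bound is independent of $B$ and $T$, which is precisely the promised statement that there is no need to control the ProbitBoost iteration count.

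Next I would verify the remaining conditions of Theorem~\ref{theorem-1}. The hypothesis $L^{*}>0$ is inherited directly from the nontrivial-noise assumption. For the approximation condition $\lim_{\lambda\to\infty}\inf_{f\in\mathcal{F}_{\lambda}}R(f)=R^{*}$, I would argue that the scaled convex hull of PMTs is rich enough to drive the exponential risk to its global infimum: since $\mathcal{H}$ contains and refines ordinary axis-aligned decision trees, whose span already suffices for AdaBoost's population risk to approach $R^{*}$, the condition carries over to the larger class of Probit Model Trees (with the leaf linear terms only enlarging the hull).

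Finally I would assemble the pieces. Because $\alpha\in(0,1)$ is fixed, the subsample size $m=\lfloor\alpha n\rfloor\to\infty$ with $n$, and a without-replacement subsample of an i.i.d.\ sample is itself an i.i.d.\ sample from $\mathcal{P}$ of size $m=\Theta(n)$; hence $T_{n}=O(n^{\nu})=O(m^{\nu})$ with $T_n\to\infty$, so Theorem~\ref{theorem-1} applies to each AdaBoosted PMT $f_{Z_{i},\lfloor\alpha n\rfloor,T_{n}}$ and yields $L(g(f_{Z_{i},\lfloor\alpha n\rfloor,T_{n}}))\to L^{*}$ almost surely, and in expectation by boundedness. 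Since the randomizations $(Z_1,\dots,Z_M)$ are identically distributed, every one of the $M$ individual classifiers is consistent for the same distribution $\mathcal{P}$, so Lemma~\ref{lemma-1} gives consistency of the majority-vote classifier, i.e.\ $E[L(g(f_{M,n,T_{n}}))]\to L^{*}$, completing the proof.
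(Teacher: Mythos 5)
Your proposal is correct, and it follows the paper's high-level roadmap exactly: show $d_{VC}(\mathcal{H})<\infty$, invoke Theorem~\ref{theorem-1} for each AdaBoosted PMT built on a subsample, and assemble with Lemma~\ref{lemma-1}. However, your treatment of the key step---finiteness of the VC dimension---takes a genuinely different route from the paper, and it is worth comparing the two. The paper introduces $\mathcal{H}_{1}$ (tree partition structures) and $\mathcal{H}_{2}$ (halfspaces $sign(\beta^{T}\mathbf{x}+b)$ arising from ProbitBoost with WLS weak learners), computes $d_{VC}(\mathcal{H}_{2})=p+1$, cites the $O(N\log(Nl))$ bound for binary tree structures, and then applies the union bound $d_{VC}(\mathcal{H})\leq d_{VC}(\mathcal{H}_{1}\cup\mathcal{H}_{2})\leq d_{VC}(\mathcal{H}_{1})+d_{VC}(\mathcal{H}_{2})+1$. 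You instead bound the growth function of the composite class directly: polynomially many distinct cell assignments from the $\leq 2^{l}-1$ axis-aligned splits, times at most $O(n^{p+1})$ halfspace dichotomies per leaf by Sauer--Shelah, then Sauer--Shelah again to conclude finiteness. Your route is arguably the more rigorous one: a PMT is a \emph{composition} of a tree partition with halfspace leaf classifiers, not an element of $\mathcal{H}_{1}\cup\mathcal{H}_{2}$, so the paper's union-bound step does not literally apply to $\mathcal{H}$ (the composite class can realize dichotomies that neither class alone can), whereas the product-of-growth-functions argument handles the composition correctly and makes explicit that the bound is independent of $B$ and $T$. You also verify two points the paper leaves implicit: the richness condition $\lim_{\lambda\to\infty}\inf_{f\in\mathcal{F}_{\lambda}}R(f)=R^{*}$ (the paper simply carries it as a hypothesis inherited from Theorem~\ref{theorem-1}), and the fact that the subsample size $m=\lfloor\alpha n\rfloor=\Theta(n)\to\infty$ so that $T_{n}=O(m^{\nu})$ and the Bartlett--Traskin stopping rule is compatible with subagging. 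Both additions strengthen the argument; the richness check is still a sketch (denseness of the span of trees should be cited or proved), but it is no weaker than what the paper itself provides.
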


\begin{proof}

Note that $\mathcal{H}$ in SBPMT consists of PMTs  which are decision tress composed with Probit boosting functions, by intuition, the VC-dimension should be finite as well.

Denote the hypothesis space $\mathcal{H}_{1}$ be the set consists of decision trees (with partition structure).  And let $\mathcal{H}_{2}$ consist of classifiers in the form of linear indicator functions generated from ProbitBoost procedure in Algorithm \ref{algo-1} with weighted least square regressors being weak learners. More specifically, $\mathcal{H}_{2}=\{ f| f(\mathbf{x})=sign(\mathbf{\beta}^{T}\mathbf{x}+b),\mathbf{\beta} \in \mathbb{R}^{p},b\in \mathbb{R}\}$ when the dimension of input data is $p$. Then it's easy to check that 

\[
d_{VC}(\mathcal{H}_{2}) =p+1 < \infty
\]

Note that $\mathcal{H}$ in SBPMT consists of PMTs. We  have

\[
d_{VC}(\mathcal{H}) \leq d_{VC}(\mathcal{H}_{1} \cup \mathcal{H}_{2}) \leq d_{VC}(\mathcal{H}_{1})+d_{VC}(\mathcal{H}_{2})+1
\]

\cite{VCdt} has shown that the VC-dimension of a binary tree structure is of order $O(Nlog(Nl))$ where $N$ is the number of internal nodes and $l$ is the number of real-valued features. Once we fix the depth of decision trees $d$, $d_{VC}(\mathcal{H}_{1})$ will be finite. The consistency of SBPMT follows from Theorem \ref{theorem-1}.
\end{proof}
\subsection{Generalization error bound}

Derived from Lemma \ref{lemma-1} and Theorem \ref{theorem-1}, Theorem \ref{theorem-2} implies that the  consistency  of SBPMT  mainly results from the consistency of AdaBoost, which is true only when we connects stopping rule with sample size in a certain way. However, the proof ignores the effect of subagging which can be more useful in real cases. In this section, we will give an explicit expression of the finite sample upper bound of generalization error of SBPMT  which demonstrates the effect of each hyperparameter in SBPMT hence guides us how to improve the performance of SBPMT in practice. We will set the boosting iteration be independent of the sample size since the requirement for consistency proved in Theorem \ref{theorem-2} is not realistic in most real cases 

\begin{theoremm}
 \label{theorem-3}   

Suppose the number of subagged classifiers is $M$ and $M>ln^{2}n$ , then over the random sampling in subagging step, with  $\delta(>0)$ and probability at least $1-\delta$  we have the following upper bound of  generalization error of voting classifier $f_{M,n,T}$ defined in equation \ref{eq-voting}:
\begin{equation}
    \label{ineq_generalization}
        P_{\mathcal{P}}(g(f_{M,n,T}(\mathbf{X}))\neq Y) \leq exp\bigg( \frac{-(\frac{\lceil M/2 \rceil -M/2}{M}+1-2p_{sub}
    )^2}{2Q_{A}^{2}\sigma_{1}^{2}+Q_{B}^{2}\beta/2+(\sqrt{Q_{B}\gamma}+4Q_{C}^{2}/3)(\frac{\lceil M/2 \rceil -M/2}{M}+1-2p_{sub}) } \bigg)
\end{equation}

,where:
\begin{align}
\begin{split}
   p_{sub}&=P_{\mathcal{P}}(f_{Z_i,m,T}(X)\cdot Y\leq 0) ,i=1,2,...,M.\\
m&=\lfloor \alpha n\rfloor,0<\alpha \leq 1\\
\sigma_{1}^{2}&=Var(\mathbb{E}(f_{Z_{i},m,T}(X_{1},...,X_{m})|X_{1}))\\
Q_{A}&= \sqrt{\frac{m^2}{n}}+\bigg(1+4\sqrt{ln(3/\delta)}\bigg)\sqrt{\frac{m}{M}}\\
Q_{B}&=\frac{m^2}{n}+\bigg(1+4\sqrt{ln(3/\delta)}\bigg)\frac{m}{\sqrt{M}}\\
Q_{C} &=\frac{m}{n}+\frac{\sqrt{2m}+3}{\sqrt{M}}ln(3/\delta)  
\end{split}
\end{align}
The subscript $\mathcal{P}$ means that random samples in a given dataset are all generated from the unknown distribution $\mathcal{P}$.
\end{theoremm}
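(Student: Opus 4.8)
The plan is to reduce the error event of the majority-vote classifier $f_{M,n,T}$ to a one-sided tail deviation of an incomplete $U$-statistic, and then apply a Bernstein-type concentration inequality for randomized incomplete $U$-statistics in the sense of \cite{DesignBasedIncomUstat}. First I would fix a test pair $(X,Y)\sim\mathcal{P}$ and set $V_i = Y\cdot f_{Z_i,\lfloor\alpha n\rfloor,T}(X)\in\{-1,1\}$ to be the margin of the $i$-th subsample classifier, with per-subsample error indicator $W_i = I(f_{Z_i,m,T}(X)\cdot Y\le 0)=\tfrac12(1-V_i)$. By the voting rule \eqref{eq-voting}, $g(f_{M,n,T}(X))\neq Y$ holds exactly when $\tfrac{1}{M}\sum_{i=1}^{M}W_i\ge \lceil M/2\rceil/M$, i.e. when the average margin $\tfrac{1}{M}\sum_i V_i$ falls below a threshold of order $1/M$. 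Since $E[W_i]=p_{sub}$ and $E[V_i]=1-2p_{sub}$, this is a deviation of a centered average below its mean, with gap equal to the numerator $a:=\tfrac{\lceil M/2\rceil - M/2}{M}+1-2p_{sub}$ of \eqref{ineq_generalization}; the weak-learnability hypothesis $p_{sub}<1/2$ guarantees $a>0$, which is precisely the regime in which subagging can drive the error down.

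Next I would identify the statistic $U_{n,M}:=\tfrac{1}{M}\sum_{i=1}^{M}W_i$ as an incomplete $U$-statistic of order $m=\lfloor\alpha n\rfloor$ with random design $\mathcal{D}=(D_1,\dots,D_M)$, its kernel $h(L_{i_1},\dots,L_{i_m})$ being the (test-point–conditional) error of the AdaBoosted PMT trained on that subsample, so that $E[U_{n,M}]=p_{sub}$ and the first-order Hoeffding projection variance is $\sigma_1^2=\mathrm{Var}(E[h\mid X_1])$, matching the definition in the statement. I would then split the deviation $U_{n,M}-p_{sub}$ into the incompleteness part $U_{n,M}-U_{n}$ (the Monte-Carlo error of the random design, governed by $M$) and the complete-$U$-statistic part $U_{n}-E[U_{n}]$ (governed by $n$ via Hoeffding's variance decomposition), bounding each on its own high-probability event and combining them by a union bound. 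This split is the source of the three $\ln(3/\delta)$ contributions, and of the two distinct scales $\sqrt{m^2/n}$ (sampling) and $\sqrt{m/M}$ (incompleteness) that appear together inside $Q_A$, $Q_B$, and $Q_C$.

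Finally I would feed the resulting variance proxy and almost-sure range bound into the Bernstein inequality: the leading variance contribution $Q_A^2\sigma_1^2$ comes from the first-order projection, the $\beta$ and $\gamma$ contributions from the higher-order ($j\ge2$) projection variances of the kernel, and the linear-in-$a$ term $\tfrac{4}{3}Q_C^2 a$ is the Bernstein range correction; assembling these into the denominator $2Q_A^2\sigma_1^2+Q_B^2\beta/2+(\sqrt{Q_B\gamma}+4Q_C^2/3)a$ and exponentiating $-a^2/(\cdot)$ gives \eqref{ineq_generalization}. The hypothesis $M>\ln^2 n$ enters here to keep the incompleteness scale $\sqrt{m/M}$ from dominating the sampling scale $\sqrt{m^2/n}$, so that the bound genuinely quantifies the benefit of increasing the number of subagging rounds.

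The step I expect to be the main obstacle is the concentration of the incomplete $U$-statistic itself. Because the $M$ subsample classifiers are all trained on the same data $S_n$, the summands $W_i$ are dependent, so neither a plain Hoeffding/Bernstein bound for i.i.d. sums nor a standard complete-$U$-statistic bound applies directly; one must control the design randomness over $\mathcal{D}$ and the data randomness over $S_n$ simultaneously and track both scales at once. Pinning down the exact constants $Q_A$, $Q_B$, $Q_C$ — in particular the interaction of the $m^2/n$, $m/\sqrt{M}$, and $m/M$ terms and their $\delta$-dependence — is the delicate part, and is where I anticipate most of the technical work and the reliance on the design-based incomplete $U$-statistic machinery of \cite{DesignBasedIncomUstat} to lie.
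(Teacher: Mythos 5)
Your first step coincides with the paper's: the voting error is rewritten as a tail event for the incomplete $U$-statistic $\frac{1}{M}\sum_{i=1}^{M}K_{i}$, $K_{i}=2I(f_{Z_{i},m,T}(X)\cdot Y\leq 0)-1$, centered at $\theta=2p_{sub}-1$, with deviation threshold $a=\frac{\lceil M/2\rceil-M/2}{M}+1-2p_{sub}>0$ under $p_{sub}<1/2$; and you correctly identify the dependence of the summands as the main obstacle. The gap is in the concentration step, which is the heart of the proof. The paper does not split $U_{n,M}-p_{sub}$ into an incompleteness part $U_{n,M}-U_{n}$ and a complete-$U$-statistic part $U_{n}-\theta$ and union-bound the two. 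It applies the Bernstein-type inequality of \cite{maurer2022exponential} \emph{conditional on a fixed design} $\mathcal{D}$, namely $P\big(U_{\mathcal{D}}(\mathbf{X})-\theta>t\big)\leq \exp\big(-t^{2}/(2A\sigma_{1}^{2}+B\beta/2+(\sqrt{B\gamma}+4C/3)t)\big)$, in which all of the data randomness (training points and test point) is absorbed, and where $A=\sum_{k}R_{k}^{2}/M^{2}$, $B=\sum_{k\neq l}R_{kl}^{2}/M^{2}$, $C=\max_{k}R_{k}/M$ are replication statistics of the design and $\beta,\gamma$ are second-order difference quantities of the kernel (not higher-order Hoeffding projection variances, as you guessed). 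Only after this does the design randomness enter: Lemmas 4.5 and 4.9 of \cite{maurer2022exponential} show that, for a design drawn uniformly with replacement and $\sqrt{M}>\ln n$, each of the three events $\{\sqrt{A}>Q_{A}\}$, $\{\sqrt{B}>Q_{B}\}$, $\{C>Q_{C}\}$ has probability at most $\delta/3$; a union bound over these three design events and substitution into the conditional inequality yields \eqref{ineq_generalization}.

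This structure explains the two points your proposal gets wrong. First, the three $\ln(3/\delta)$ factors and the hypothesis $M>\ln^{2}n$ come from the high-probability control of the design quantities $A,B,C$, not from balancing the incompleteness scale $\sqrt{m/M}$ against the sampling scale $\sqrt{m^{2}/n}$, and not from a data/design split. Second, your plan spends data randomness on a union-bound event (to control $U_{n}-E[U_{n}]$), which is structurally incompatible with the statement being proved: inequality \eqref{ineq_generalization} is an exponential bound on the data probability $P_{\mathcal{P}}(\cdot)$ that must hold with probability $1-\delta$ over the subagging randomness alone; if the complete-$U$ part is handled by a high-probability event over the data, the conclusion can no longer be phrased as a bound on $P_{\mathcal{P}}$. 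Finally, your appeal to \cite{DesignBasedIncomUstat} points at the wrong tool --- the paper borrows only the word ``design'' from that reference --- and your last step (``assembling these into the denominator gives the bound'') is asserted rather than derived; the specific constants $Q_{A},Q_{B},Q_{C},\sigma_{1}^{2},\beta,\gamma$ are exactly what Maurer's theorem and lemmas supply, and no argument in your proposal produces them. So the central step of the proof is missing.
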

\begin{remmark}
\hfill
\begin{enumerate}
  \item Note that the RHS of the inequality above is a monotonic increasing function w.r.t the value of $p_{sub}$. This enables us to replace $p_{sub}$ by some upper bounds so that we can make the inequality for the generalization error upper bound more flexible. 
  \item We have two layers of randomness in SBPMT. The outer layer is the random sampling from from the unknown distribution $\mathcal{P}$. The inner randomness, which is measured by the positive number $\delta$, comes from the subagging step.
\end{enumerate}
\end{remmark}

Theorem \ref{theorem-3} tells us that the number of subaggingg times  plays a role in controlling the generalization error bound of SBPMT. When we increase $M$, the RHS of the inequality above will decrease accordingly. However, quantities $Q_{A},Q_{B}$ and $Q_{C}$ won't vanish even when $M \to \infty$,  which implies that the large number of $M$ doesn't necessarily give a better performance of the ensemble classifier. Such observation motivates us to improve time efficiency in practice by not setting $M$ too large. The requirement that subagging numbers should be larger than $ln^{2}n$ is not too strict in practice. Suppose we have 20000 observations, Theorem \ref{theorem-3} suggests that we need to set $M$ be larger than $ln^{2}(20000)\approx 98.08$.

On the other hand, the RHS of inequality \eqref{ineq_generalization} is a decreasing function w.r.t $p_{sub}$. That means, hyperparameters benefiting the performance of boosted classifiers also improve the generalization error bound.  Based on this property, we recall two useful theorems in the margin theory of AdaBoost.

\begin{theoremm}[\cite{Boosting_Foundations_and_Algorithms}]
\label{theorem-4}
    Suppose we run AdaBoost for T rounds on $n$ random samples, using base classifiers from a hypothesis space $\mathcal{H}$ with finite VC-dimension $d$. Assume $n\geq max\{d,T\}$, then the boosted classifier $F$ satisfies
    \[
    P_{\mathcal{P}}(YF(X)\leq 0) \leq P_{S}(YF(X)\leq 0)+\sqrt{\frac{32[T(ln(en/T))+d(ln(em/d))+ln(8/\delta)]}{m}}
    \],
with probability at least $1-\delta$ (over the choice of random sample from distribution $\mathcal{P}$). $P_{S}(YF(X)\leq 0)$ is the empirical error rate of boosted classifier $F$.
\begin{remmark}
If we replace $F$ in the LHS of the inequality with a subagging classifier, $P_{\mathcal{P}}(YF(X)\leq 0)$ can be written as $p_{sub}$.
\end{remmark}
\end{theoremm}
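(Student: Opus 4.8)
The plan is to stop regarding the output $F=\sum_{t=1}^{T}\alpha_{t}h_{t}$ of AdaBoost as a single fixed function and instead place it inside the \emph{combined} hypothesis class
\[
\mathcal{C}_{T}=\Big\{\,\mathbf{x}\mapsto \operatorname{sign}\Big(\sum_{t=1}^{T}\alpha_{t}h_{t}(\mathbf{x})\Big):\ \alpha_{t}\in\mathbb{R},\ h_{t}\in\mathcal{H}\,\Big\},
\]
which contains every classifier AdaBoost can possibly return after $T$ rounds. Because the weights $\alpha_{t}$ and the base learners $h_{t}$ are chosen in a data-dependent way, one cannot apply a fixed-function tail bound; instead any deviation bound that holds \emph{uniformly} over all $c\in\mathcal{C}_{T}$ automatically covers the particular $F$ the algorithm produces. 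Thus it suffices to control $\sup_{c\in\mathcal{C}_{T}}\big(P_{\mathcal{P}}(c)-P_{S}(c)\big)$, where $P_{S}$ is the empirical probability on the i.i.d.\ training points (the sample size being the size $m$ on which AdaBoost is run, which matches $\lfloor\alpha n\rfloor$ in the SBPMT application).

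First I would bound the growth function $\Pi_{\mathcal{C}_{T}}(m)$ by decomposing a classifier in $\mathcal{C}_{T}$ into its two sources of freedom. Holding the tuple $(h_{1},\dots,h_{T})$ fixed, each $h_{t}$ restricted to the sample realizes at most $\Pi_{\mathcal{H}}(m)\le(em/d)^{d}$ distinct labelings by Sauer--Shelah (valid since $m\ge d$), so the base learners contribute a complexity governed by $d\ln(em/d)$. For a fixed behavior of the $h_{t}$'s the sample points map into $\{-1,1\}^{T}$ and $\operatorname{sign}(\sum_{t}\alpha_{t}h_{t})$ acts as a \emph{homogeneous} linear threshold in $\mathbb{R}^{T}$; this family has VC dimension exactly $T$, so by Sauer--Shelah it produces at most $(em/T)^{T}$ dichotomies (valid since $m\ge T$), contributing $T\ln(em/T)$. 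Organizing these two counts as in \cite{Boosting_Foundations_and_Algorithms} yields $\ln\Pi_{\mathcal{C}_{T}}(m)$ of the order $T\ln(en/T)+d\ln(em/d)$, exactly the complexity appearing inside the square root.

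Next I would feed this into the classical Vapnik--Chervonenkis uniform deviation inequality in the form $P\big(\sup_{c\in\mathcal{C}_{T}}|P_{\mathcal{P}}(c)-P_{S}(c)|>\varepsilon\big)\le 8\,\Pi_{\mathcal{C}_{T}}(m)\,e^{-m\varepsilon^{2}/32}$. Setting the right-hand side equal to $\delta$ and solving for $\varepsilon$ gives precisely $\varepsilon=\sqrt{32\big(\ln\Pi_{\mathcal{C}_{T}}(m)+\ln(8/\delta)\big)/m}$, which is where the constant $32$ and the term $\ln(8/\delta)$ originate. Specializing the event $c$ to $\{YF(X)\le 0\}$ and substituting the growth-function estimate from the previous step produces the stated bound $P_{\mathcal{P}}(YF(X)\le 0)\le P_{S}(YF(X)\le 0)+\sqrt{32[T\ln(en/T)+d\ln(em/d)+\ln(8/\delta)]/m}$, with the hypothesis $n\ge\max\{d,T\}$ ensuring all logarithmic arguments stay positive and Sauer--Shelah applies.

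The main obstacle is the growth-function step, not the probabilistic one. A naive product of the two Sauer bounds gives $\Pi_{\mathcal{H}}(m)^{T}(em/T)^{T}$, whose logarithm is of order $dT\ln(em/d)+T\ln(em/T)$ --- a \emph{product} $dT$ in the first term, which is looser than the additive $T\ln(en/T)+d\ln(em/d)$ that the theorem claims. Obtaining the sharper additive form, and verifying that the homogeneous threshold has VC dimension exactly $T$ with no spurious bias coordinate inflating the count, is the delicate combinatorial part and is carried out in \cite{Boosting_Foundations_and_Algorithms}; the symmetrization and the tracking of the factor $32$ through the VC inequality are then routine.
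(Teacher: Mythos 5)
The paper contains no proof of this statement: it is imported verbatim (with transcription slips) from \cite{Boosting_Foundations_and_Algorithms}, so the only meaningful comparison is with the book's argument --- and your outline \emph{is} that argument. Absorbing the data-dependent output of AdaBoost into the combined class $\mathcal{C}_{T}$, bounding its growth function by the two-stage count (at most $\Pi_{\mathcal{H}}(m)^{T}\le (em/d)^{dT}$ joint behaviors of the base learners by Sauer--Shelah, times at most $(em/T)^{T}$ dichotomies from homogeneous linear thresholds in $\mathbb{R}^{T}$, whose VC dimension is indeed exactly $T$), and then invoking the one-sided uniform deviation bound $8\,\Pi_{\mathcal{C}_{T}}(m)\,e^{-m\varepsilon^{2}/32}$ to produce the constant $32$ and the $\ln(8/\delta)$ term is precisely how the cited theorem is proved. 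Up to your final paragraph the proposal is correct and complete in outline.

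The genuine problem is in that final paragraph, and it points in the wrong direction. Your ``naive'' count, $\ln\Pi_{\mathcal{C}_{T}}(m)\le T\ln(em/T)+dT\ln(em/d)$, is the bound the book actually proves and uses; the complexity term in the source carries the \emph{product} $dT$, of the form
\begin{equation*}
\sqrt{\frac{32\left[T\bigl(\ln(em/T)+d\ln(em/d)\bigr)+\ln(8/\delta)\right]}{m}}.
\end{equation*}
The additive form $T\ln(en/T)+d\ln(em/d)$ printed in the paper (note also its inconsistent mixing of $n$ and $m$, propagated into Theorem \ref{theorem-6}) is a transcription error, not a sharper result, and no ``delicate combinatorial part'' can rescue it: the VC dimension of $\mathcal{C}_{T}$ grows like $dT$ up to logarithmic factors, with matching lower bounds for natural base classes, so a uniform-convergence complexity of order $d+T$ is impossible in general. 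You should therefore trust your own two-stage count rather than defer to the reference for a sharpening that does not exist there; your argument as it stands proves the correct (multiplicative) statement, and the discrepancy with the displayed inequality should be flagged as a misprint. The appended remark (rewriting the left-hand side as $p_{sub}$ for a subagged classifier) is a definitional relabeling and requires no proof.
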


\begin{theoremm}[\cite{margin_adaboost}]
\label{theorem-5}
Given a training set $S$, suppose AdaBoost generates classifiers with weighted training errors $\varepsilon_{1},...,\varepsilon_{T}$ and final weighted classifier is $f$. Then for any $\theta $, we have 
\[
P_{S}(Yf(X)\leq \theta)\leq 2^{T}\prod_{t=1}^{T}\sqrt{ \varepsilon_{t}^{1-\theta}(1-\varepsilon_{t})^{1+\theta}}
\]
\end{theoremm}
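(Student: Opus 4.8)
The plan is to mimic the classical argument that AdaBoost drives the empirical margin distribution below a threshold, adapting the standard ``training error shrinks like $\prod_t Z_t$'' identity to the margin parameter $\theta$. Throughout I take $f$ to be the \emph{normalized} combined classifier $f(X) = \bigl(\sum_{t=1}^{T}\alpha_t h_t(X)\bigr)\big/\sum_{t=1}^{T}\alpha_t$, so that the margin $Yf(X)$ lives in $[-1,1]$ and is directly comparable with $\theta$. Pinning down this normalization convention is the first thing to fix, since it is exactly what makes the exponent factor cleanly in the linearization step; with the unnormalized sum the $\theta$-dependence would not come out as $e^{\theta\sum_t\alpha_t}$.

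First I would linearize the indicator. Writing $A=\sum_{t=1}^{T}\alpha_t$, the event $\{Yf(X)\le\theta\}$ is precisely $\{Y\sum_t\alpha_t h_t(X)-\theta A\le 0\}$, and the elementary bound $I[v\le 0]\le e^{-v}$ gives
\[
I[Yf(X)\le\theta]\;\le\;e^{\theta A}\exp\Bigl(-Y\sum_{t=1}^{T}\alpha_t h_t(X)\Bigr).
\]
Averaging over the training set $S$ then yields $P_S(Yf(X)\le\theta)\le e^{\theta A}\cdot\frac1n\sum_{i=1}^{n}\exp\bigl(-Y_i\sum_t\alpha_t h_t(X_i)\bigr)$, so the task reduces to controlling this empirical exponential loss.

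The second step is to identify that loss with the product of the AdaBoost normalization factors. Unrolling the weight recursion $w_{t+1}(i)=w_t(i)\,e^{-\alpha_t Y_i h_t(X_i)}/Z_t$ from $w_1(i)=1/n$ produces the closed form $w_{T+1}(i)=\frac1n\exp\bigl(-Y_i\sum_t\alpha_t h_t(X_i)\bigr)\big/\prod_t Z_t$; summing over $i$ and using $\sum_i w_{T+1}(i)=1$ gives $\frac1n\sum_i\exp\bigl(-Y_i\sum_t\alpha_t h_t(X_i)\bigr)=\prod_{t=1}^{T}Z_t$. Here I would note that the update written in the excerpt with $I(Y_i\neq G_t(X_i))$ agrees with the $e^{-\alpha_t Y_i h_t(X_i)}$ form, via $Y_i h_t(X_i)=1-2I(Y_i\neq h_t(X_i))$, the common factor being absorbed into $Z_t$. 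Reconciling these two weight-update conventions is the one bookkeeping point that genuinely needs care, and is where I expect the only real friction in the proof; everything surrounding it is mechanical.

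Finally I would evaluate the constants explicitly. Splitting $Z_t=\sum_i w_t(i)e^{-\alpha_t Y_i h_t(X_i)}$ into correctly and incorrectly classified examples gives $Z_t=(1-\varepsilon_t)e^{-\alpha_t}+\varepsilon_t e^{\alpha_t}$, and substituting the step size $\alpha_t=\frac12\ln\frac{1-\varepsilon_t}{\varepsilon_t}$ collapses this to $Z_t=2\sqrt{\varepsilon_t(1-\varepsilon_t)}$, while $e^{\theta\alpha_t}=\bigl(\frac{1-\varepsilon_t}{\varepsilon_t}\bigr)^{\theta/2}$. Multiplying term by term, $e^{\theta A}\prod_t Z_t=\prod_t e^{\theta\alpha_t}Z_t$, and simplifying the exponents of $\varepsilon_t$ and $1-\varepsilon_t$ gives
\[
\prod_{t=1}^{T}2\,\varepsilon_t^{(1-\theta)/2}(1-\varepsilon_t)^{(1+\theta)/2}\;=\;2^{T}\prod_{t=1}^{T}\sqrt{\varepsilon_t^{1-\theta}(1-\varepsilon_t)^{1+\theta}},
\]
which is the claimed bound. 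Beyond the convention-matching flagged above, the remaining work is the routine algebra of the recursion identity and the two closed forms for $Z_t$ and $e^{\theta\alpha_t}$.
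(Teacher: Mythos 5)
Your proof is correct, and it is essentially the classical argument of Schapire--Freund--Bartlett--Lee: linearize the margin indicator via $I[v\le 0]\le e^{-v}$, identify the empirical exponential loss with $\prod_t Z_t$ through the weight recursion, and evaluate $Z_t=2\sqrt{\varepsilon_t(1-\varepsilon_t)}$ and $e^{\theta\alpha_t}=\bigl(\tfrac{1-\varepsilon_t}{\varepsilon_t}\bigr)^{\theta/2}$. The paper itself gives no proof of this statement --- it imports Theorem \ref{theorem-5} by citation from \cite{margin_adaboost} --- so your derivation simply reconstructs the proof of the cited source, and does so correctly.
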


Now let $\theta =0$, we have

\[
P_{S}(Yf(X)\leq 0)\leq 2^{T}\prod_{t=1}^{T}\sqrt{ \varepsilon_{t}(1-\varepsilon_{t})}
\]

Let $\varepsilon_{t}= 1/2-\gamma_{t}$, we can further obtain the following inequality (\cite{Boosting_Foundations_and_Algorithms}):
\begin{equation}
   P_{S}(Yf(X)\leq 0)\leq \prod_{t}^{T} \sqrt{1-4\gamma_{t}^2} \leq exp\bigg( -2\sum_{t=1}^{T}\gamma_{t}^{2}\bigg) 
   \label{ineq1}
\end{equation}
if $0< \gamma_{t}\leq \frac{1}{2}$ for $t=1,2,...,T$. 

Inequality \eqref{ineq1} shows that the training error of AdaBoost decreases exponentially fast. However, the second term in the RHS of the inequality in Theorem \ref{theorem-4} might increase as well when $T$ becomes large. This observation implies that there might be an optimal iteration time $T^{*}$ for AdaBoost which gives the smallest generalization error upper bound. 

On the other hand, when $T$ gets even larger, the upper bound in Theorem \ref{theorem-4} becomes useless. \cite{margin_adaboost} gave an example showing that the boosting can have even smaller testing error when the training error becomes zero already. Although they also proposed the margin theory to qualitatively explain the shape of the observed learning curves and \cite{GAO20131} proved a sharper upper bound of generalization error of AdaBoost based on margin theory, it's still unclear how these results provide practical guidance in controlling the prediction error. 

Fortunately, we are able to give a specific upper bound like Theorem \ref{theorem-4} by properties of ProbitBoost, which is useful in controlling the generalization error by not setting  $T$ too large. Lemma \ref{lemma-2} and Lemma \ref{lemma-3} are the two most critical properties of ProbitBoost we will use.

\begin{lemmma}
\label{lemma-2}
Let $Q(x)=-log(\Phi(x))$ where $x$ is any real number and $\Phi$ is the cumulative probability function for standard normal distribution. Then $Q''(x)>0$ and $Q'''(x)<0$.
\end{lemmma}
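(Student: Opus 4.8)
The plan is to reduce both inequalities to statements about the reverse hazard rate $r(x)=\phi(x)/\Phi(x)$, where $\phi=\Phi'$ is the standard normal density, and then dispatch the resulting elementary inequalities. First I would compute the derivatives of $Q=-\log\Phi$ using $\phi'(x)=-x\phi(x)$. This gives $Q'=-r$, and the quotient rule yields the Riccati identity $r'=-r(x+r)$, so that
\[
Q''=-r'=r(x+r),\qquad Q'''=-r''=r\bigl[\,1-(x+r)(x+2r)\,\bigr].
\]
Since $r>0$ everywhere, the two claims reduce to $x+r>0$ (for $Q''>0$) and $(x+r)(x+2r)>1$ (for $Q'''<0$).

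The claim $Q''>0$ is routine. I would set $m(x)=x\Phi(x)+\phi(x)=\Phi(x)\,\bigl(x+r(x)\bigr)$; since $m'(x)=\Phi(x)>0$ and $m(x)\to0$ as $x\to-\infty$ (a one-line Mills-ratio estimate), $m$ is strictly positive, hence $x+r>0$. This is exactly the classical log-concavity of $\Phi$.

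The real content is $Q'''<0$, i.e. $(x+r)(x+2r)>1$. Writing $u=x+r>0$ and using $x+2r=2u-x$, this becomes the quadratic inequality $2u^2-xu-1>0$; since $u>0$ and the parabola has one positive and one negative root, it is equivalent to $u>\tfrac14\bigl(x+\sqrt{x^2+8}\bigr)$, i.e. to the reverse-hazard lower bound $r(x)\ge\tfrac14\bigl(\sqrt{x^2+8}-3x\bigr)$; equivalently, after clearing denominators, to $N(x):=(x\Phi+\phi)(x\Phi+2\phi)-\Phi^2>0$. I would split on the sign of $x$. For $x>0$ I would rule out a nonpositive interior minimum: if $N$ took a value $\le0$ on $(0,\infty)$ it would have a critical point $x_0$ with $N(x_0)\le0$ and $N'(x_0)=0$, where $N'=2x\Phi^2+(1-x^2)\Phi\phi-x\phi^2$. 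Substituting $N'(x_0)=0$ into $N$ collapses it to $N(x_0)=(x_0^2+3)\Phi^2+\tfrac{x_0^2+2}{x_0}\,\Phi\phi$, which is strictly positive for $x_0>0$; together with $N(0)>0$ and $N(+\infty)=+\infty$ this forces $N>0$ on $[0,\infty)$.

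The hard part is the regime $x\to-\infty$. There the two factors of $(x+r)(x+2r)$ behave oppositely ($x+r\to0^+$ while $x+2r\to+\infty$) and the product tends to $1$ from above like $1+2/x^4$, so the leading terms cancel and no crude estimate suffices; moreover the critical-point substitution above is useless here, since the Mills lower bound it would require is false near the origin. My plan is to note that on $x\le0$, with $t=-x\ge0$ and $r(x)=1/R(t)$ where $R(t)=\bigl(1-\Phi(t)\bigr)/\phi(t)$ is the Mills ratio, the desired bound $r(x)\ge\tfrac14\bigl(\sqrt{x^2+8}-3x\bigr)$ is exactly the classical sharp inequality $R(t)<\tfrac{4}{3t+\sqrt{t^2+8}}$ (Sampford), whose asymptotic tightness as $t\to\infty$ is precisely what pins down the delicate $x\to-\infty$ limit. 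Invoking this bound—or reproving it from the differential relation $R'(t)=tR(t)-1$ together with the boundary behaviour—closes the case $x\le0$, and combined with the $x\ge0$ analysis gives $N>0$ on all of $\mathbb{R}$, hence $Q'''<0$. The main obstacle is thus concentrated entirely in establishing (or citing) this sharp Mills-ratio estimate, which is the only ingredient capable of resolving the cancellation at $-\infty$.
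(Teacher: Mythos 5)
Your proposal is correct and follows essentially the same route as the paper's proof: the same reduction of $Q'''<0$ to the inequality $(x\Phi+\phi)(x\Phi+2\phi)>\Phi^{2}$, the same split at $x=0$, and the same key ingredient---the sharp Sampford/Szarek Mills-ratio bound---to handle the delicate regime $x\to-\infty$. The only differences are cosmetic: you organize the derivatives through the reverse hazard rate $r=\phi/\Phi$ and dispose of $x>0$ by a critical-point contradiction, whereas the paper simply checks that $\Phi^{2}-(x\Phi+\phi)(x\Phi+2\phi)$ is decreasing on $[0,\infty)$ and negative at $x=0$.
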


\begin{lemmma}
\label{lemma-3}
    Let $P_{n}$ be the empirical probit risk function defined in section 3.1. A sequence of functions $\{f^{t}\}_{t\geq 0}$ is generated by ProbitBoost described in section 3.1-3.2, i.e $f^{t+1}=f^{t}+g^{t}$ where $g^{t}$ is the base learner fitted at $t$-th iteration of ProbitBoost. Then for a given dataset and  each $t\geq 0$,
    \[
    P_{n}(f^{t})-P_{n}(f^{t+1})\geq \frac{1}{2}\gamma ||g^{t}||_{2}^2
    \].
    In particular, $P_{n}(f^{t}) \downarrow \inf_{k}P_{n}(f^{k})$ as $t\to \infty$, $\sum_{t=1}^{\infty}||g^{t}||_{2}^{2}<\infty$, and $\lim_{t \to \infty}||g^{t}||_{2}=0$, where $||g^{t}||_{2}=\sqrt{\sum_{i=1}^{n}g^{t}(x_i)^2/n}$ and $\gamma=\inf\{P_{n}''(f),f\in \mathcal{H} \}$ ( which is positive for a given data set)
\end{lemmma}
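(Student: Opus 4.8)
The plan is to read Lemma \ref{lemma-3} as a second-order sufficient-decrease statement for the forward stagewise minimization of $P_n$, and then extract the three convergence consequences by telescoping. First I would write the empirical risk in separable form $P_n(f)=\frac1n\sum_{i=1}^n Q(Y_i f(\mathbf{x}_i))$ with $Q(x)=-\log\Phi(x)$, so that Lemma \ref{lemma-2} applies termwise. Restricting to the line through $f^t$ in the direction $g^t$, set $\psi(\lambda)=P_n(f^t+\lambda g^t)$; then $\psi'(\lambda)=P_n'(f^t+\lambda g^t;g^t)$ and $\psi''(\lambda)=\frac1n\sum_i Q''(Y_i(f^t+\lambda g^t)(\mathbf{x}_i))\,g^t(\mathbf{x}_i)^2$, using $Y_i^2=1$. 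Since $Q''>0$ by Lemma \ref{lemma-2}, $\psi$ is convex and $\psi''(\lambda)\ge \gamma\|g^t\|_2^2$, where $\gamma=\inf\{P_n''(f)\}>0$ is the curvature floor asserted for the fixed data set.

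The key structural input is that $g^t$ is the weighted-least-squares minimizer of the local quadratic model. Completing the square shows that the WLS fit of the working response with the algorithm's weights (section 3.2) is exactly the minimizer over the base class of $\tilde P(g)=P_n(f^t)+P_n'(f^t;g)+\frac12 P_n''(f^t;g)$. Because the base class of linear WLS directions is a cone closed under nonnegative scaling, optimality of $g^t$ along its own ray forces the stationarity $\frac{d}{dc}\tilde P(c g^t)\big|_{c=1}=0$, that is $P_n'(f^t;g^t)=-P_n''(f^t;g^t)$, equivalently $\psi'(0)=-\psi''(0)$. I would then invoke the exact Taylor identity $\psi(0)-\psi(1)=-\psi'(0)-\frac12\psi''(s)=\psi''(0)-\frac12\psi''(s)$ for some $s\in(0,1)$ (or the integral form $\psi''(0)-\int_0^1(1-u)\psi''(u)\,du$), so the claimed bound reduces to controlling the remainder $\psi''(s)$ against $\psi''(0)$.

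Controlling this remainder is the main obstacle. The natural tool is the second conclusion of Lemma \ref{lemma-2}, $Q'''<0$, which makes $Q''$ strictly decreasing; termwise the quadratic model minorizes $Q$ in every margin-increasing coordinate ($Y_i g^t(\mathbf{x}_i)>0$) and majorizes it in the margin-decreasing ones, so the sign of $\psi''(s)-\psi''(0)$ is not automatic. The delicate step is to show that along this Newton-type step the aggregate curvature does not increase, $\psi''(s)\le\psi''(0)$, combining the monotonicity of $Q''$ with the optimality of $g^t$; granting this, $\psi(0)-\psi(1)\ge \psi''(0)-\frac12\psi''(0)=\frac12\psi''(0)\ge\frac12\gamma\|g^t\|_2^2$. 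I would also record here why $\gamma>0$ is legitimate for a fixed data set: with $L^*>0$ the sample is not perfectly separable, the iterates remain in a bounded-margin region, and $Q''$ is bounded away from $0$ there.

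Finally, the ``in particular'' clause follows cleanly from the per-step inequality and needs no further structure. Since $0<\Phi<1$ gives $Q>0$, hence $P_n\ge 0$, the sequence $P_n(f^t)$ is nonincreasing and bounded below, so $P_n(f^t)\downarrow \inf_k P_n(f^k)$. Summing $P_n(f^t)-P_n(f^{t+1})\ge\frac{\gamma}{2}\|g^t\|_2^2$ telescopes to $\frac{\gamma}{2}\sum_{t=0}^{T-1}\|g^t\|_2^2\le P_n(f^0)-P_n(f^T)\le P_n(f^0)<\infty$; letting $T\to\infty$ and using $\gamma>0$ yields $\sum_t\|g^t\|_2^2<\infty$, and therefore $\|g^t\|_2\to 0$.
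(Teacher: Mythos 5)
Your setup is faithful to the paper in several respects: the separable form $P_n(f)=\frac{1}{n}\sum_i Q(Y_i f(\mathbf{x}_i))$, the curvature floor $\gamma$, and the WLS stationarity identity $\psi'(0)=-\psi''(0)$ (which is exactly the paper's normal-equation identity $-P_n'(f^{t})\cdot g^{t}=P_n''(f^{t})\cdot (g^{t})^{2}$), and your telescoping derivation of the three ``in particular'' conclusions is the same as the paper's. But the core of the lemma --- the per-step decrease --- is not actually proved in your proposal: you reduce it to the claim $\psi''(s)\le \psi''(0)$ and then write ``granting this.'' That claim does not follow from Lemma 2: by the mean value theorem, $\psi''(s)-\psi''(0)=\frac{s}{n}\sum_i Q'''(\xi_i)\,Y_i g^{t}(\mathbf{x}_i)^{3}$, and since $Q'''<0$ the summands with $Y_i g^{t}(\mathbf{x}_i)>0$ push the right way while those with $Y_i g^{t}(\mathbf{x}_i)<0$ push the wrong way; nothing in your argument controls these mixed signs. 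So the step you yourself flag as ``delicate'' is a genuine hole, and it is precisely where the difficulty of the lemma lives.

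The paper's proof is oriented differently so that this comparison is never needed. It Taylor-expands $P_n(f^{t})$ \emph{around the new iterate} $f^{t+1}$, giving $P_n(f^{t})=P_n(f^{t+1})-P_n'(f^{t+1};g^{t})+\frac{1}{2}P_n''(\tilde f;g^{t},g^{t})$ with $\tilde f$ between $f^{t}$ and $f^{t+1}$. In this backward expansion the intermediate-point curvature enters with a \emph{plus} sign, so it is handled by the one-sided bound $P_n''(\tilde f;g^{t},g^{t})\ge \gamma\,||g^{t}||_{2}^{2}$ alone --- no comparison of curvatures at two different points is required. What remains is to show $P_n'(f^{t+1};g^{t})\le 0$, which the paper gets by expanding the derivative, $P_n'(f^{t+1};g^{t})=P_n'(f^{t};g^{t})+P_n''(f^{t};g^{t},g^{t})+\frac{1}{2}P_n'''(\hat f;g^{t},g^{t},g^{t})$: the first two terms cancel by the WLS identity, and the third-order remainder is discarded by invoking $Q'''<0$ (Lemma 2). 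If you want to complete your argument, switch to this backward expansion; your forward expansion forces an upper bound on $\psi''(s)$, which is strictly more than the paper needs. (To be fair, the paper's discard of the third-order term faces the same mixed-sign subtlety you identified, since $P_n'''(\hat f;g^{t},g^{t},g^{t})=\frac{1}{n}\sum_i Q'''(Y_i\hat f(\mathbf{x}_i))\,Y_i g^{t}(\mathbf{x}_i)^{3}$; but that is the single step the paper attributes to Lemma 2, whereas your route leaves an additional, different inequality unproven.)
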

In Lemma \ref{lemma-3}, we assume that each observation is equally weighted when we calculate empirical probit risk function. As we can see in the proof of Lemma \ref{lemma-3}, the non-increasing property of empirical probit risk function in ProbitBoost is still true for arbitrary weighted distribution of sample points due to the definition of Gateaux derivative. This observation is critical in seeing the importance of Theorem \ref{theorem-6}. We now give Corollary 1 without proof to emphasize this point:
\newtheorem{corollaryy}{Corollary}
\begin{corollaryy}
\label{col-1}
    The non-increasing property still holds when we apply the ProbitBoost to minimize the weighted probit risk function $P_{n,\mathbf{w}}$, where $\mathbf{w}$ is the weight vector assigned to sample points.
\end{corollaryy}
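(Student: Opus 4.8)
The plan is to replay the proof of Lemma~\ref{lemma-3} verbatim, carrying the external weight $w_i$ through every summation over observations. First I would write the weighted empirical probit risk as $P_{n,\mathbf{w}}(f)=\sum_{i=1}^{n}w_i\,Q(Y_i f(x_i))$ with $Q(x)=-\log\Phi(x)$ and each $w_i>0$, and observe that because Gateaux differentiation commutes with the finite sum, the first- and second-order differentials in a direction $g$ are
\[
P_{n,\mathbf{w}}'(f;g)=\sum_{i=1}^{n}w_i\,Q'(Y_i f(x_i))\,Y_i g(x_i),\qquad
P_{n,\mathbf{w}}''(f;g)=\sum_{i=1}^{n}w_i\,Q''(Y_i f(x_i))\,g(x_i)^{2},
\]
where $Y_i^{2}=1$ was used. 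Lemma~\ref{lemma-2} supplies $Q''>0$, so with $w_i>0$ the second differential is strictly positive for every nonzero direction; hence the weighted curvature constant $\gamma_{\mathbf{w}}=\inf\{P_{n,\mathbf{w}}''(f):f\in\mathcal{H}\}$ is positive for a fixed dataset, exactly as $\gamma$ is in the unweighted case.

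Next I would reproduce the descent step of Lemma~\ref{lemma-3}. Applying a second-order Taylor expansion of $P_{n,\mathbf{w}}$ along the segment from $f^{t}$ to $f^{t+1}=f^{t}+g^{t}$, and using that $g^{t}$ is the weighted Newton / weighted-least-squares update produced in step~(c) of Algorithm~\ref{algo-1}, the first-order term is absorbed as a genuine descent term while the positive curvature bound $\gamma_{\mathbf{w}}$ controls the quadratic remainder. This yields
\[
P_{n,\mathbf{w}}(f^{t})-P_{n,\mathbf{w}}(f^{t+1})\;\geq\;\tfrac{1}{2}\,\gamma_{\mathbf{w}}\,\|g^{t}\|_{2,\mathbf{w}}^{2}\;\geq\;0,
\]
where $\|g^{t}\|_{2,\mathbf{w}}^{2}=\sum_{i=1}^{n}w_i\,g^{t}(x_i)^{2}$ is the weighted analogue of the norm in Lemma~\ref{lemma-3}. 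The non-increasing conclusion, together with the same summable-increment consequences, then follows term for term.

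The only place where the argument is not purely cosmetic is the bookkeeping that folds the external weights $\mathbf{w}$ into ProbitBoost consistently: the weights used to fit $g^{t}$ must be the product of the internal ProbitBoost weights of step~(b) and the external $w_i$, and the norm appearing in the bound must be the matching weighted norm rather than the uniform $\sqrt{\sum_i g^{t}(x_i)^{2}/n}$. The delicate check is that this combined weighted least-squares fit is precisely what makes the first-order Gateaux term $P_{n,\mathbf{w}}'(f^{t};g^{t})$ behave as a descent term, so that it can be paired with the curvature lower bound. Once this alignment is confirmed, every remaining step is the term-by-term transcription of Lemma~\ref{lemma-3} that is invoked when attributing the result to the definition of the Gateaux derivative.
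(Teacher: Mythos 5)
Your plan---transcribe the proof of Lemma~\ref{lemma-3} with the external weights folded into every sum over observations, into the WLS normal equations, and into the norm---is exactly the argument the paper has in mind: Corollary~\ref{col-1} is stated there without proof precisely because the Gateaux-derivative computation in Lemma~\ref{lemma-3} is insensitive to replacing the uniform weights $1/n$ by an arbitrary positive weight vector $\mathbf{w}$. Your identification of the one non-cosmetic point (the weights used in the step~(c) weighted least-squares fit must be the same ones appearing in $P_{n,\mathbf{w}}$, so that the weighted normal equations $-P_{n,\mathbf{w}}'(f^{t};g^{t})=P_{n,\mathbf{w}}''(f^{t};(g^{t})^{2})$ still cancel the first-order term against the curvature term) is precisely why the term-by-term transcription succeeds.
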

Lemma \ref{lemma-3} implies that increasing the number of iterations in ProbitBoost will always reduce the value of empirical probit risk function.  Note that for $\forall z \in R$:
\begin{equation}
\label{eq17}
    I(z\leq 0) \leq \frac{Q(z)}{ln2}
\end{equation}
It follows that large number of iterations in ProbitBoost at each terminal node of PMT will generate a smaller upper bound of weighted training error. As a result, we can iterate Adaboost procedure in SBPMT for a few times to achieve good performance with the price of relatively large number of iterations in ProbitBoost. As we pointed out in section 3.2, generating a simple linear combination of features makes a ProbitBoost classifier maintain the interpretability even after a long time iteration. Thus the final model won't be too complicated. Theorem \ref{theorem-6} then gives the upper bound of generalization error of AdaBoost when PMTs are base learners:
\begin{theoremm}
\label{theorem-6}
    Suppose we run AdaBoost for T rounds on $n$ random samples, using base classifiers from a hypothesis space consisting of PMTs. Let $\varepsilon_{t}^{P}(B)$ be the weighted probit risk of a PMT at $t$-th iteration of AdaBoost. Integer number $B$ represents the ProbitBoost iteration times at each terminal node of a PMT. Let $d$ be the  upper bound of VC-dimension of the hypothesis space consisting of PMTs.
    Assume $n\geq max\{d,T\}, 0\leq \varepsilon_{t}^{P}(B)/ln(2) < \frac{1}{2}$, then the boosted classifier $F$ satisfies
    \begin{equation}
    \label{eq16}
            P_{\mathcal{P}}(YF(X)\leq 0) \leq exp\bigg(-2\sum_{t=1}^{T}\gamma_{t}^{2}(B)\bigg)+\sqrt{\frac{32[T(ln(en/T))+d(ln(en/d))+ln(8/\delta)]}{n}}
    \end{equation},
with probability at least $1-\delta$ (over the choice of random sample from distribution $\mathcal{P}$). 
$\gamma_{t}(B)$ is defined to be:
\[\gamma_{t}(B)=\frac{1}{2}-\frac{\varepsilon_{t}^{P}(B)}{ln(2)}\].
\end{theoremm}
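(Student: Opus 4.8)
The plan is to chain the data-dependent generalization bound of Theorem \ref{theorem-4} with the training-error bound \eqref{ineq1}, and then replace the genuine AdaBoost edge $\gamma_t$ by its probit surrogate $\gamma_t(B)$ using inequality \eqref{eq17}. First I would apply Theorem \ref{theorem-4} directly, taking the base hypothesis class to be PMTs with VC-dimension at most $d$ and the sample size to be $n$. Since $n\geq\max\{d,T\}$, this yields
\[
P_{\mathcal{P}}(YF(X)\leq 0) \leq P_S(YF(X)\leq 0) + \sqrt{\frac{32[T\ln(en/T)+d\ln(en/d)+\ln(8/\delta)]}{n}},
\]
which already produces the second summand on the right-hand side of \eqref{eq16}. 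It then remains only to bound the empirical error $P_S(YF(X)\leq 0)$ by the exponential first summand.

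Next I would control the empirical error. Specializing Theorem \ref{theorem-5} to $\theta=0$ and writing the weighted $0/1$ error of the $t$-th base learner as $\varepsilon_t=\tfrac12-\gamma_t$, inequality \eqref{ineq1} gives $P_S(YF(X)\leq 0)\leq \exp(-2\sum_{t=1}^T\gamma_t^2)$ whenever every edge satisfies $0<\gamma_t\leq\tfrac12$. The decisive step is relating the true edge $\gamma_t$ to the probit quantity $\gamma_t(B)$. Let $f_t$ denote the real-valued ProbitBoost score underlying the $t$-th PMT, so that the weighted misclassification error fed into step $(b_2)$ of Algorithm \ref{algo-2} is $\varepsilon_t=\sum_i w_t(i)I(Y_i f_t(x_i)\leq 0)$. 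Applying inequality \eqref{eq17} pointwise with $z=Y_i f_t(x_i)$ and averaging against the AdaBoost weights $w_t(i)$ yields
\[
\varepsilon_t \leq \frac{1}{\ln 2}\sum_i w_t(i)\,Q(Y_i f_t(x_i)) = \frac{\varepsilon_t^P(B)}{\ln 2},
\]
since the right-hand sum is precisely the weighted probit risk $\varepsilon_t^P(B)$ of the PMT.

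Consequently $\gamma_t=\tfrac12-\varepsilon_t \geq \tfrac12-\varepsilon_t^P(B)/\ln 2=\gamma_t(B)$, while the hypothesis $\varepsilon_t^P(B)/\ln 2<\tfrac12$ guarantees $\gamma_t(B)>0$ and $\varepsilon_t\geq 0$ forces $\gamma_t\leq\tfrac12$, so \eqref{ineq1} does apply. Because $x\mapsto\exp(-2x)$ is decreasing and $\gamma_t\geq\gamma_t(B)>0$, monotonicity gives
\[
P_S(YF(X)\leq 0) \leq \exp\Big(-2\sum_{t=1}^T\gamma_t^2\Big) \leq \exp\Big(-2\sum_{t=1}^T\gamma_t(B)^2\Big),
\]
and substituting this into the bound from Theorem \ref{theorem-4} produces \eqref{eq16} exactly.

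I expect the only genuinely delicate point to be the identification in the displayed inequality for $\varepsilon_t$: one must verify that the weighted $0/1$ error driving AdaBoost's step size coincides with $\sum_i w_t(i)I(Y_i f_t(x_i)\leq 0)$ for the \emph{same} score $f_t$ whose weighted probit risk is $\varepsilon_t^P(B)$, and that inequality \eqref{eq17} is applied to that score rather than to its sign. Everything else is monotonicity and substitution. The role of $B$ is then purely interpretive: by Lemma \ref{lemma-3} and Corollary \ref{col-1} the weighted probit risk $\varepsilon_t^P(B)$ is non-increasing in $B$, so a larger $B$ enlarges each $\gamma_t(B)$ and shrinks the first summand of \eqref{eq16}, which is the practical guidance the theorem is meant to supply.
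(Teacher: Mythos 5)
Your proposal is correct and follows essentially the same route as the paper's proof: apply Theorem \ref{theorem-4} for the generalization gap, bound the empirical error via Theorem \ref{theorem-5} at $\theta=0$, and convert the weighted $0/1$ error into the weighted probit risk through inequality \eqref{eq17} applied to the real-valued PMT score (which the paper writes out explicitly as a sum over the tree partitions $S_a^t$ with scores $G_a^{t,B}$). The only cosmetic difference is where the monotonicity is invoked — the paper substitutes $\varepsilon_t^P(B)/\ln 2$ into $\varepsilon_t(1-\varepsilon_t)$ before exponentiating, while you pass to the genuine edges $\gamma_t$ first and then use $\gamma_t\geq\gamma_t(B)$ — and the two orderings are equivalent.
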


According to Corollary \ref{col-1}, for fixed $T$, if we fit weighted probit risk function at each partitioned space, $\varepsilon_{t}(B)$ will decrease as we increase the ProbitBoost iteration time $B$, which in fact lowers down the value of $exp(-2\sum_{t=1}^{T}\gamma_{t}^{2}(B))$ as long as we set $B$ large enough. That means the assumption of $\varepsilon_{t}^{P}(B)$ in Theorem \ref{theorem-6} makes sense in practice.

The main message conveyed by Theorem \ref{theorem-6} is that employing PMTs as base learner in AdaBoost can not only lead to a smaller prediction error without setting running time $T$ of AdaBoost large but also make the second term in the upper bound \eqref{eq16} meaningful when $T$ is small. The cost is we need to set large number of ProbitBoost iterations at each terminal node of trees to achieve smaller (weighted) training error for each partition. Combining these facts with Theorem \ref{theorem-3}, we anticipate that SBPMT with less AdaBoost iterations steps and subagging times could have even better performance than ensemble methods using ordinary (CART) trees (no model included) as base learner. In section 6, the performance of SBPMT will be compared with other popular ensemble learning algorithms.
\section{Experiments}

\subsection{Simulation}

To verify the conjecture in section 5, we created an artificial dataset which is in the spirit to the work of \cite{Simudata}. More specifically, we will generate data point from the following model:

\[
P(Y=1|\mathbf{x})=q+(1-2q)\mathbf{1}\bigg( \sum_{e=1}^{E}\mathbf{x}^{(e)}>E/2\bigg)
\]
where $\mathbf{x}^{e}$ is the $e$-th dimension of an observation $\mathbf{x}$. The response $\mathbf{Y}\in \{-1,1\}$ and random sample $X$ will be i.i.d and uniformly distributed on the $d$-dimension unit cube $[0,1]^{d}$. According to \cite{Simudata}, $q$ is the Bayes error and $E\leq d$ is the effective dimension. For time efficiency, we only set $d=10$ and effective dimension $E=5$. There are 2000 training cases and 10000 test observations. Each subplot in Figure \ref{fig2} illustrates test error rate for SBPMT, as a function of a single hyperparameter. The benchmark setting for SBPMT is $M=5,B=5,T=5$ and $\alpha=0.7$. And each CART tree has depth $d=3$ with min\_leaf\_size  20. To obtain, for example the test error as a function of subagging times, we set $M=1,2,...,100$ while other parameters remain the same with benchmark setting. Similar strategy was applied to the other three hyperparameters we are interested.

\begin{figure}[htb]

 \begin{subfigure}{0.48\textwidth}
     \includegraphics[width=\textwidth]{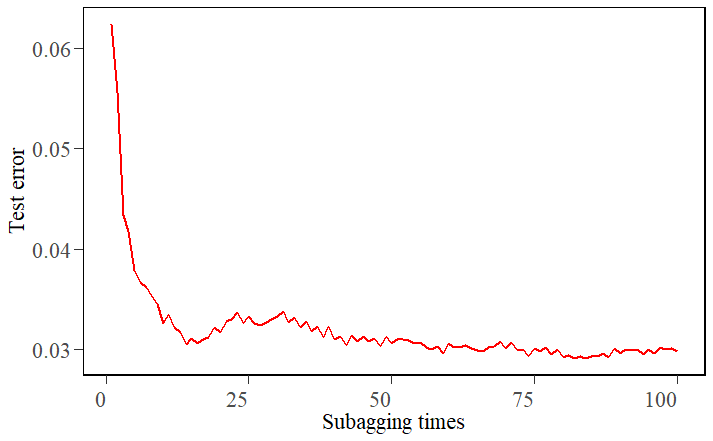}
     \caption{}
     \label{fig:a}
 \end{subfigure}
 \begin{subfigure}{0.48\textwidth}
     \includegraphics[width=\textwidth]{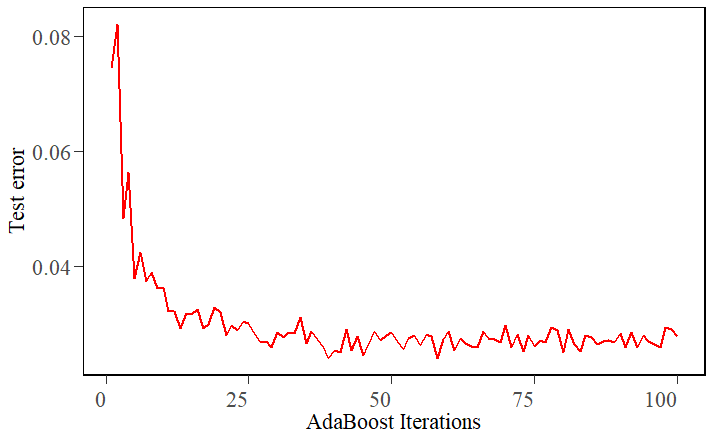}
     \caption{}
     \label{fig:b}
 \end{subfigure}

 \begin{subfigure}{0.48\textwidth}
     \includegraphics[width=\textwidth]{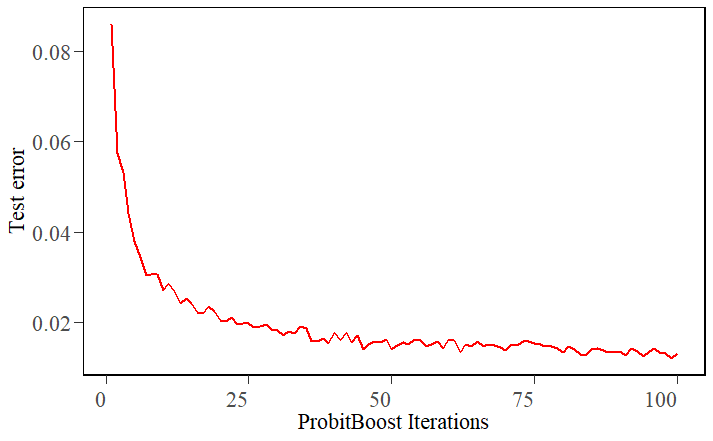}
     \caption{}
     \label{fig:c}
 \end{subfigure}
 \begin{subfigure}{0.48\textwidth}
     \includegraphics[width=\textwidth]{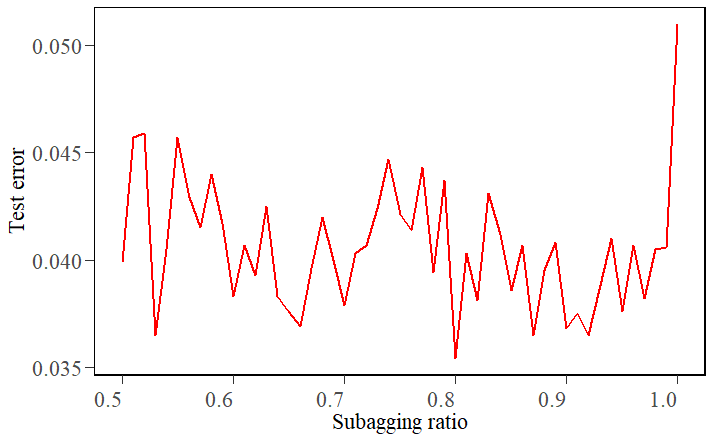}
     \caption{}
     \label{fig:d}
 \end{subfigure}

 \caption{Test errors for SBPMT. (a) Test error rate as a function of Subbagging Times, (b) Test error rate as a function of AdaBoost iterations, (c) Test error rate as a function of ProbitBoost iterations, (d) Test error rate as a function of Subagging ratio.}
 \label{fig2}
\end{figure}

As we can see in Figure 2, as subagging times increase the test error rate will in general decreases. When the subagging time is large, the improvement of test prediction is negligible which obeys the anticipation from inequality (\ref{ineq_generalization}). On the other hand, when boosting iterations for AdaBoost part proceed, the error rate  decreases as well. Similar observation can be made for boosting iterations for ProbitBoost part. Nevertheless, it can be seen in subplots that test error w.r.t ProbitBoost iterations decreases more steadily which implies a smaller variance while it shows a relatively large volatility in AdaBoost iterations. Such difference can be inferred from fact that the empirical probit risk function is tigher than exponential risk w.r.t the indicator function, which favors the non-increasing property in Lemma \ref{lemma-3}. What's more, test error function of ProbitBoost iterations has the lowest long run test error. This is a clue showing that ProbitBoost can dominate the effect from subagging and AdaBoost. In other words, local structure can indeed capture data patterns which may be ignored by global structure of the ensemble method. It's hard to see any pattern for test error and  subagging ratio. This observation can also expected from the generalization error bound in Theorem \ref{theorem-3} as $\alpha$ appears in not only three quantities $Q_A,Q_B$ and $Q_C$ but also $p_{sub}$ which is far more complicated. The only thing we can clearly see is, when we set $\alpha=1$, i.e the full dataset is utilized, we will end with relatively high test error as SBPMT will reduce to a single boosted classifier which might learn noise from the whole dataset.

\subsection{Performance with real datasets}
\setlength{\tabcolsep}{6pt} 
\begin{table}[htb]
\scriptsize
\captionsetup{font=footnotesize,singlelinecheck=off}
\caption{ Real datasets used for the experiments,sorted by size} 
\centering 
\begin{tabular}{ccccc} 
\hline
Dataset &Instances&  Numerical attributes  & Categorical attributes & Classes 
\\ [0.5ex]  
\hline
Iris & 150&4& 0&3 \\
  \\ [-1ex] 
Glass & 214&9& 0&6 \\
  \\ [-1ex]  
Ionosphere& 351&33 & 0&2 \\
 \\ [-1ex]  
 Diabetes&520&0& 17&2\\
         \\ [-1ex]  
 Breast-Cancer& 569&30 & 0&2 \\
 \\ [-1ex]  
   Balance-scale & 625&4 &0 &3\\
     \\ [-1ex]  
 Australian  & 690&6 & 8&2 \\
 \\ [-1ex]  
Pima-indians &768&8 & 0&2 \\
 \\ [-1ex]  
   Vehicle & 846&18 & 0&4\\
     \\ [-1ex]  
     Raisin & 900 &7& 0&2\\
   \\ [-1ex]  
    Tic-tac-toe &958&0 & 9&2\\
        \\ [-1ex] 
 German &1000&6 & 14&2\\
      \\ [-1ex] 
 Biodegradation & 1051&38 & 3&2 \\
     \\ [-1ex]  
 BHP&1075&20& 1&4\\
         \\ [-1ex]  
      Diabetic & 1151 &16 & 3&2\\
 \\ [-1ex] 
 Banknote &1372&4 & 0&2\\
      \\ [-1ex]   
  Contraceptive&1473&2 & 7&3\\
         \\ [-1ex]  
Obesity&2111&3& 13&7\\
         \\ [-1ex]  
Segments&2310&19& 0&7\\
         \\ [-1ex]  
Waveform+noise&5000&40& 0&3\\
         \\ [-1ex]  
Pendigits&10000&16& 0&10\\
         \\ [-1ex]  
Letter&20000&16& 0&26\\
\\[-1ex]
\hline
\end{tabular}  
\label{table-1}
\end{table}

\setlength{\tabcolsep}{4pt} 

\begin{table}[htb]
\scriptsize
\captionsetup{font=footnotesize,singlelinecheck=off}
\caption{ Mean classification accuracy and standard deviation for SBPMT vs.RandomForest,GradientBoost(100)\\AdaBoost.M1(100),XGBoost(10),XGBoost(100)} 
\centering 
\begin{tabular}{cccccccc } 
\hline
Dataset &SBPMT  & RandomForest  &GradientBoost(100) & AdaBoost.M1(100) & XGBoost(10) &XGBoost(100)
\\ [0.5ex]  
\hline
Iris &\textbf{96.00 $\pm$ 5.62} & 94.67$\pm$ 5.26 &94.67$\pm $6.13 & 94.00$\pm$ 7.34 &95.33 $\pm$ 5.49&95.33$\pm$5.49\\
  \\ [-1ex]  
Glass & 75.67 $\pm$ 8.97 & 77.09$\pm$ 9.57 &75.27$\pm $9.55 & 73.26$\pm$9.60&75.62 $\pm$ 8.78&77.53 $\pm$9.35\\
  \\ [-1ex]  
Ionosphere& 92.87 $\pm$ 2.75 & 93.68$\pm$ 3.84& 92.00 $\pm$ 3.77& 93.68$\pm$ 3.39&92.86 $\pm$ 2.79&92.85 $\pm$2.10\\
 \\ [-1ex]  
 Diabetes&95.96 $\pm$ 1.42 & 97.69$\pm$ 1.52&91.35$\pm $3.04& 97.5$\pm$2.04 &95.38$\pm$ 2.26&95.38 $\pm$2.26\\
       \\ [-1ex] 
       Breast-Cancer & \textbf{97.03 $\pm$ 2.45} &96.34 $\pm$ 4.27& 95.82 $\pm$4.62& 96.16 $\pm$ 3.62&93.89 $\pm$5.72 &95.64 $\pm$ 4.96\\
 \\ [-1ex] 
 Balance-scale &  \textbf{95.19 $\pm$ 2.62}  & 83.53 $\pm$ 4.08& 90.88 $\pm$1.52 & 85.46$\pm$ 3.86&86.56$\pm$ 2.73 &87.84 $\pm$3.22\\
     \\ [-1ex] 
     Australian  & 86.39 $\pm$ 1.97 & 87.41$\pm$ 3.69& 86.25$\pm$ 3.11 & 85.26$\pm$ 4.90 &85.53$\pm$ 3.81&86.97 $\pm$3.03\\
 \\ [-1ex] 
Pima-indians & \textbf{77.73 $\pm$ 6.24} & 76.57$\pm$ 6.12&75.78 $\pm$ 5.30 & 73.31$\pm$ 6.03&75.26 $\pm$ 7.07&75.65 $\pm$6.76 \\
 \\ [-1ex]  
   Vehicle &\textbf{82.97 $\pm$ 4.92}& 75.30 $\pm$ 3.95&72.81 $\pm$3.43 &78.49 $\pm$ 3.43&76.35 $\pm$ 4.36 &78.25 $\pm$ 3.90\\
     \\ [-1ex]  
Raisin & 86.44 $\pm$ 3.70 & 86.44 $\pm$ 4.02& 86.56 $\pm$3.65 & 86.89 $\pm$ 4.68&84.67$\pm$ 6.01 &84.33 $\pm$ 4.04\\
   \\ [-1ex]  
      Tic-tac-toe & 97.91 $\pm$1.56 &99.16 $\pm$ 1.08&74.23 $\pm $3.76  & 99.58 $\pm$ 0.73&98.22 $\pm$ 1.72 &98.33 $\pm$1.50\\
     \\ [-1ex]  
 German &74.80 $\pm$ 3.39 & 76.20$\pm$ 3.74&75.20 $\pm$  2.78 & 74.60$\pm$3.17 &74.30 $\pm$ 3.34&75.7 $\pm$2.41\\
      \\ [-1ex] 
       Biodegradation & \textbf{87.59$\pm$ 1.92 }& 87.20 $\pm$ 2.47&  84.16 $\pm$3.73 & 86.54 $\pm$1.64&86.16 $\pm$ 2.67 &86.63 $\pm$ 1.84\\
     \\ [-1ex]  
BHP&100 $\pm$ 0 & 100$\pm$ 0.00&86.90$\pm $3.79  &100$\pm$0.00&98.97 $\pm$ 1.36&100 $\pm$0.00\\
       \\ [-1ex]  
Diabetic & \textbf{74.02 $\pm$ 4.96} &67.95$\pm$ 5.68& 65.86 $\pm$  4.56   & 67.95$\pm$ 4.12 &68.98 $\pm$ 4.89&67.86 $\pm$5.33\\
 \\ [-1ex]  
   Banknote &\textbf{99.78 $\pm$ 0.35} & 99.34$\pm$ 0.94&96.72 $\pm $2.02&99.64 $\pm$ 0.52&98.61 $\pm$ 1.44 &98.69 $\pm$ 1.32\\
     \\ [-1ex]  
  Contraceptive&  \textbf{55.93 $\pm$ 4.42}  & 54.16$\pm$ 3.33&55.46$\pm $2.56 & 55.73$\pm$3.33 &54.44 $\pm$ 3.34&52.00 $\pm$3.93\\
       \\ [-1ex]   
Obesity&97.39 $\pm$1.10& 95.03$\pm$  1.12&82.19$\pm $2.18 &97.44 $\pm$  1.03&95.08$\pm$ 1.45&97.49 $\pm$0.92\\
       \\ [-1ex]  
Segments&98.31 $\pm$ 1.18 & 97.79$\pm$ 1.84&94.76$\pm $2.04  & 98.53$\pm$1.24 &97.71 $\pm$ 1.61&98.48 $\pm$1.25\\
       \\ [-1ex]  
Waveform+noise&\textbf{86.16 $\pm$ 1.39} & 85.36$\pm$ 2.08&84.78$\pm $1.93 & 84.56$\pm$2.02 &84.26$\pm$ 1.74&85.38 $\pm$1.93\\
       \\ [-1ex]  
Pendigits& \textbf{ 99.38 $\pm$ 0.24} & 99.21$\pm$ 0.25&90.63$\pm $1.17 & 98.57$\pm$0.25 &98.20 $\pm$ 0.49&99.18 $\pm$0.33\\
       \\ [-1ex]  
Letter&95.50 $\pm$ 0.31 & 96.85$\pm$ 0.24&73.17$\pm $0.64  & 96.12$\pm$0.36 &90.14 $\pm$ 0.90&96.60 $\pm$0.27\\
\\[-1ex]
\hline
\end{tabular}  
\begin{tablenotes}\scriptsize
\item[*]  *The best performing values are highlighted in bold
\end{tablenotes}
\label{table-2}
\end{table}  

 We used 22 benchmark datasets from the UCI repository (\citeauthor{UCI}) to evaluate the performance of SBPMT and compare against with other state-of-the-art boosting learning algorithms. The details of those dataset are included in Table \ref{table-1}. For simplicity, we didn't use datasets with missing values but there are many methods to deal with this issue in practice. The size of datasets ranges from one hundred to twenty thousand observations. They contain varying number of numerical attributes and categorical attributes. We also deal with some multi-class problems. 

 Ten fold stratified cross-valiation was performed for each dataset and algorithm.  For each algorithm, they shared the same training and testing splitting. Mean classification accuracy and standard deviation are reported in Table \ref{table-2}\footnote{The default subsampling ratio in  GradientBoost and XGBoost is set to be 0.7 for the sake of comparison}. All algorithms are implemented by corresponding R packages.\footnote{We employed "rpart" package for RandomForest;R package "adabag" and "RWeka“ are used for AdaBoost; "xgboost" is used for XGBoost algorithm; "gbm" is used for GradientBoost. R markdown files for experiments and figures can be found in https://github.com/BBojack/SBPMT}

In practice, people can apply cross-validations to determine the best value of hyperparameters. In theory, for instance, theorem \ref{theorem-3}  suggests that the subagging number $M$  be larger than $ln^{2}(n)$. Lemma \ref{lemma-2} and Theorem \ref{theorem-5}  encourage small number iterations in AdaBoost  but  large number of iterations in ProbitBoost when using PMTs as base learners.  For simplicity and time efficiency, we used same hyperparametes in SBPMT for all datasets. More specifically, number of subagging $M=21$, number of AdaBoost iteration $T=5$, number of ProbitBoost iteration $B=100$, subagging ratio $\alpha=0.7$, depth of CART tree $l=6$ and min\_leaf\_size  is 20. 
We set relatively small subagging times in order to speed up the training part. It turns out that small value of $M$ is enough for SBPMT achieving decent accuracy in many real cases. If time efficiency is not a priority, we recommend that the value of $M$ should satisfy the theoretical inequality in Theorem \ref{theorem-3}. Otherwise, a small value of $M$ is enough. The number of AdaBoost iteration and the number of ProbitBoost iteration are set in accordance with analysis in section 5 and simulation results.

 From Table \ref{table-2} we see that SBPMT reaches  almost the same mean classification accuracy as XGBoost(with 100 iterations) and  many other popular ensemble algorithms. It outperforms all selected algorithms on 11 datasets which consist of small, medium and large-sized datasets. That implies SBPMT is competitive with any size of datasets. Compared with Random Forest which typically needs 500 subtrees, SBPMT in our experiments only uses 21 subagged samples. More over, the AdaBoost iteration times in SBPMT is only 5 which is much smaller than that of normal AdaBoost using hundreds of CART trees as base learners. Based on these points, SBPMT makes the final model potentially easier for practitioner to understand without digging into too many subtrees. The linear form of PMT at terminal nodes of trees enables people to interpret the effect of features at each partitioned space. 

In Figure \ref{fig-trend-acc}, we first sorted the datasets by their size. and plotted the accuracy of SBPMT, RF and XGboost(100). When the size of a dataset is small of medium, we observe that SBPMT outperforms RF and XGboost(100). When we have large number of observations, these three algorithm have similar performance. This phenomenon provides an evidence that SBPMT is generally good and might be especially powerful in smaller datasets. 

In Figure \ref{fig-trend-ratio}, we  sorted the datasets by the ratio of the number of numerical attributes and the total number of attributese. If some datasets have a same ratio, they were sorted by the size. Accuracies of SBPMT, RF and XGboost(100) were given in y-axis. We observe that when features in a dataset are all numerical, SBPMT is likely to outperform than other methods. One possible explanation is, we fit a linear model by ProbitBoost at each terminal node of a CART tree, which would favor numerical attributes. Despite of the complexity of real world datasets, our results provide an empirical clue that SBPMT may perform better in datasets with more numerical attributes.

\begin{figure}
\includegraphics[width=0.95\textwidth]{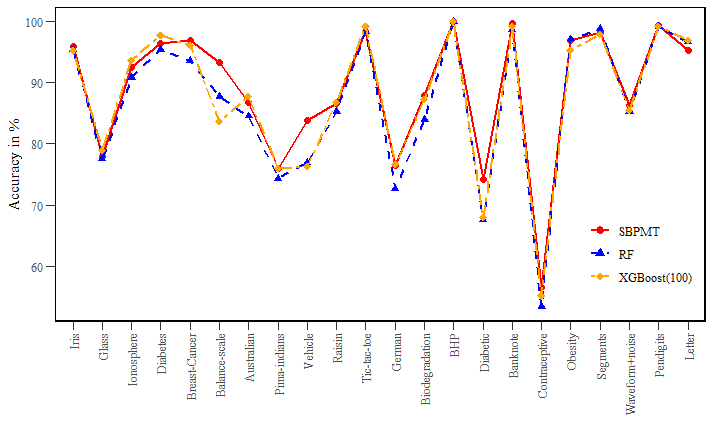}
\caption{Accuracy of SBPMT,RF and XGboost(100). Datasets are sorted by size in ascending order (left to right)}
\label{fig-trend-acc}
\end{figure}
\begin{figure}
\includegraphics[width=0.95\textwidth]{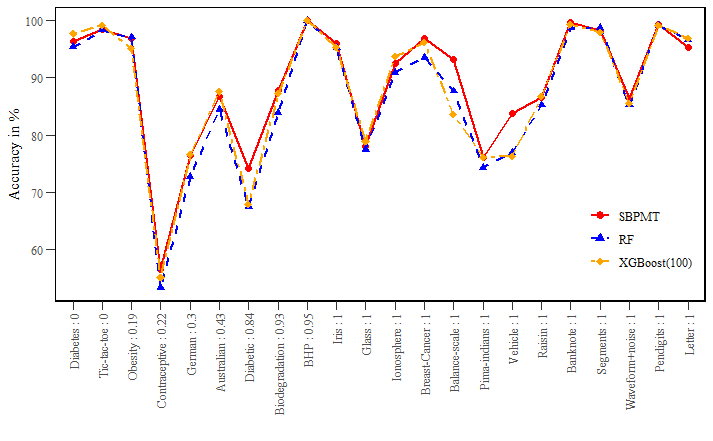}
\caption{Accuracy of SBPMT,RF and XGboost(100) w.r.t the ratio of the number of numerical attributes and the total number of attributes. Datasets are sorted by size in ascending order if they have a same ratio.}
\label{fig-trend-ratio}
\end{figure}

\newpage
\section{Discussion}

For now, we have shown the consistency of SBPMT when the iteration number in AdaBoost part is a function of sample size. Nevertheless, this condition is not realistic in many cases. For large sample size, it's time consuming to run too many rounds of AdaBoost. It's possible to think about a more general condition for the consistency of SBPMT. That might be one of future directions. At the same time, we provides a useful upper bound of the generalization error of SBPMT by using a latest exponential inequality of incomplete U-statistics (\cite{maurer2022exponential}). Inequality in Theorem \ref{theorem-3} explains how subagging helps to improve the performance of ensemble methods. Non-increasing property of the empirical probit risk function under ProbitBoost suggests us use large number of boosting iteration for ProbitBoost in PMT. Combining this result with properties of AdaBoost, we only need to use a relative small  number of AdaBoost iterations in practice to achieve a good performance of SBPMT. In section 6 we compared the performance of SBPMT with real datasets. The results demonstrate that SBPMT is competitive with many popular ensemble learning algorithms, especially for datasets with small and medium sample size. For datasets whose features are all numerical, there is evidence that SBPMT outperforms other methods. That would be an interesting direction for furture research. 

Because of the flexity of bagging-boosting type algorithm, we can propose many other variants of SBPMT. In this paper, we construct a PMT based CART structure which is simple but sufficient. In literature, there are other methods to build a decision tree. For example, it's not necessary to split a feature space by axis-parallel method. The other way is oblique splitting, which means we cut the feature space by the linear combination of features.In the groundwork of Classification And Regression Trees(CART) (\cite{Breiman1984ClassificationAR}), Breiman has already pointed out the idea of splitting the dataset by the combination of variables. However, the implementation of the algorithm in finding the optimal hyperplane in feature space has been the real issue for many years especially for cases involved large dataset. For high-dimensional case, \cite{Breiman1984ClassificationAR}  provided a heuristic algorithm CART-LC to find the relatively good hyperplane. In brief, his method weeds out the unnecessary variable by a backward deletion process. The least important variable is the one whose deletion gives the minimal decrease in impurity. 


Since the traditional search of optimal hyperplane is likely to stuck in local optimal points , many stochastic optimization goes to the stage. \cite{SADT} introduced Simulated Annealing Decision Tree (SADT) which uses randomization to avoid getting trapped in local optimal value of change of impurity function. The drawback is, the algorithm may evaluate tens of candidates in order to find the global solution. The use of statistic models hasalso been explored. \cite{Truong2009FastGA} used logistic regression to build oblique trees.  \cite{LOPEZCHAU20136283}  induced oblique trees by Fisher's  linear discriminant method. \cite{WICKRAMARACHCHI201612} applied a geometric method in linear algebra called Householder transformation in finding optimal oblique splitting. 

Recently, many other strategies have been taken to find optimal oblique splitting. For example, \cite{RN52}
 formulated an integer programming method to optimize decision trees with fixed depth. \cite{BLANQUERO2020255} proposed a continuous optimization approach to build sparse optimal oblique trees. 
How to employ oblique decision trees with our proposed subagging-boosting type algorithm is another potential direction of future research.

\long\def\acks#1{\vskip 0.3in\noindent{\large\bf Acknowledgments }\vskip 0.2in
\noindent #1}
\acks{This  work was supported  by Lee fellowship in Lehigh University.}

\appendix
\section*{Appendix A. Lemma \ref{lemma-1} proof}

\begin{proof}

Note that 
\begin{equation}
\label{eq19}
        L(f_{m,n})=1-E[\eta(X)I(f_{m}(X)=1)+(1-\eta(X))I(f_{m}(X)=-1)]
\end{equation}

\begin{equation}
\label{eq20}
        L(g)=1-E[\eta(X)I(g(X)=1)+(1-\eta(X))I(g(X)=-1)]
\end{equation}
where $I(\cdot)$ is the indicator function.

Let equation \eqref{eq19} minus equation \eqref{eq20}, and we obtain 

\begin{equation}
\label{eq21}
    \begin{split}
    E[L(f_{m,n})]-L(g^{*})&=L(f_{m,n})-L^{*}=E\bigg[  (2\eta(X)-1)(I(f_{m}(X)=-1)- I(g^{*}(X)=-1)) \bigg]\\
    &=E_{(X,Y)\sim \mathcal{P}}\bigg[E_{\mathcal{Z}}\bigg[  \bigg|2\eta(X)-1\bigg|\bigg(I(f_{m}(X)\neq g^{*}(X) \bigg)\bigg]  \bigg]\\
    &=E_{(X,Y)\sim \mathcal{P}}\bigg[ (1-2\eta(x))I(\eta(x)<1/2)P_{\mathcal{Z}}(f_{m,n}(x)=1)\\
    &+ (2\eta(x)-1)I(\eta(x)>1/2)P_{\mathcal{Z}}(f_{m,n}(x)=-1)\bigg]
    \end{split}
\end{equation}

WLOG, we can assume  $\eta(x)>\frac{1}{2}$ for $\mu$-almost all $x$ where $\mu$ is the probability measure w.r.t distribution $\mathcal{P}$. Now equation \eqref{eq21} becomes

\begin{equation}
    E[L(f_{m,n})]-L(g^{*})=E_{(X,Y)\sim \mathcal{P}}\bigg[  (2\eta(X)-1)P_{\mathcal{Z}}(f_{m,n}(X)=-1)\bigg]
\end{equation}

Similarly, we have 

\begin{equation}
    E[L(f_{S_n,Z_{i}})]-L(g^{*})=E_{(X,Y)\sim \mathcal{P}}\bigg[  (2\eta(X)-1)P_{\mathcal{Z}}(f_{S_n,Z_i}(X)=-1)\bigg]
\end{equation}

By assumption, $EL(f_{S_n,Z_{i}}) \to L(g^{*})$. Thus we must have $P_{\mathcal{Z}}(f_{S_n,Z_i}(X)=-1)\to 0$.

For given $x$ s.t $\eta(x)>\frac{1}{2}$, we have
\begin{equation}
    \begin{split}
        P_{Z}(f_{m,n}(X)=-1)&=P_{\mathcal{Z}}\bigg( \sum_{i=1}^{m}I(f_{S_n,Z_{i}}(X)=-1)\geq \frac{m}{2}  \bigg)\\
        &\leq \frac{m}{2}\sum_{i=1}^{m}E_{Z_{i}}[I(f_{S_n,Z_{i}}(X)=-1)]\\
        &=2P_{\mathcal{Z}}(f_{S_n,Z_{i}}(X)=-1) \to 0
    \end{split}
\end{equation}

It follows that $EL(f_{m,n}) \to L(g^{*})$ as well, i.e the voting classifier is consistent.

\end{proof}

\section*{Appendix B. Lemma \ref{lemma-2} proof}

\begin{proof}
    To show $Q''(x)>0$, we need to perform some calculations directly.  Note that
    \begin{equation}
        Q''(x)=\frac{\phi(x)[x\Phi(x)+\phi(x)]}{\Phi^{2}(x)}
    \end{equation}
    where $\phi(x)$ is the normal density function.

    Recall the fact that $x\Phi(x)+\phi(x)>max(0,x)$ (a simple calculus exercise). It follows that $Q''(x)>0$.

On the other hand, we have:

    \begin{equation}
      Q'''(x)=\frac{\phi(x)[-(x\Phi(x)+2\phi(x))(x\Phi(x)+\phi(x))+\Phi^{2}(x)]}{\Phi^{3}(x)}
    \end{equation}

    It suffices to show $-(x\Phi(x)+2\phi(x))(x\Phi(x)+\phi(x))+\Phi^{2}(x)<0,\forall x \in R$.

Now denote  $-(x\Phi(x)+2\phi(x))(x\Phi(x)+\phi(x))+\Phi^{2}(x)$ as $M(x)$. When $x\geq 0$, we have

 \begin{equation}
     M'(x)=-x^2 \phi(x)\Phi(x)-2x\Phi^{2}(x)-x\phi^{2}(x)-\phi(x)\Phi(x) \leq 0 \quad \text{when $x\geq 0$} 
 \end{equation}
 But $M(0)=-2+\frac{1}{4}=-\frac{7}{4} <0$, we conclude that $M(x)<0$ when $x\geq 0$.

 When $x$ is positive, we will resort to few useful tail inequalities of normal distribution functions. By \cite{sampford} and \cite{SZAREK1999193}, we have

  \begin{equation}
     1-\Phi(x)<\frac{4\phi(x)}{\sqrt{8+x^2}+3x}, \quad \text{for $x\geq 0$}
 \end{equation}

 When $x<0$, we can replace $x$ in (24) by $-x$, which gives us:

   \begin{equation}
   \label{eq29}
     \Phi(x)<\frac{4\phi(x)}{\sqrt{8+x^2}-3x}
 \end{equation}

Next, \eqref{eq29} also implies

\[
x\Phi(x)+\phi(x) > \phi(x)\bigg(\frac{4x}{\sqrt{8+x^2}-3x}+1\bigg)= \phi(x)\bigg(\frac{x+\sqrt{8+x^2}}{\sqrt{8+x^2}-3x}\bigg)
\]
and 
\[
x\Phi(x)+2\phi(x) > \phi(x)\bigg(\frac{4x}{\sqrt{8+x^2}-3x}+2\bigg)= \phi(x)\bigg(\frac{-2x+2\sqrt{8+x^2}}{\sqrt{8+x^2}-3x}\bigg)
\]
It follows that

\begin{equation}
    \begin{split}
        -\bigg(x\Phi(x)+2\phi(x)\bigg)\bigg(x\Phi(x)+\phi(x)\bigg)+\Phi^{2}(x) &<- \phi^{2}(x)\bigg( \frac{2(x+\sqrt{x^2+8})(\sqrt{x^2+8}-x)}{(\sqrt{2+x^2}-x)^2} \bigg)\\
        &+\frac{16\phi^{2}(x)}{(\sqrt{8+x^2}-3x)^2}\\
        &=- \phi^{2}(x)\bigg(  \frac{2(x^2+8-x^2)-16}{(\sqrt{2+x^2}-x)^2}\bigg)=0, \quad \text{for $x<0$}
    \end{split}
\end{equation}

Thus $Q'''(x)<0, \forall x \in R$.

\end{proof}

\section*{Appendix C. Lemma \ref{lemma-3} proof}

\begin{proof}
    By Taylor expansion of  $P_{n}(f^{t})$ in direction of $g^{t}$, we have 
    \begin{equation}
        \begin{split}
            P_{n}(f^{t}) &=P_{n}(f^{t}+g^{t})-P_{n}'(f^{t}+g^{t});g^{t})+\frac{1}{2}P_{n}''(\Tilde{f}^{t+1};g^{t}) \quad \text{(where $\Tilde{f}^{t+1}=f^{t}+\alpha_{t}g^{t}$, $\alpha_{t} \in (0,1)$ )} \\
            &=P_{n}(f^{t+1})-P_{n}'(f^{t}+g^{t})\cdot g^{t}+\frac{1}{2}P_{n}''(\Tilde{f}^{t})\cdot (g^{t})^{2} \quad \text{(where "$\cdot$" represents inner product)}\\
            &=P_{n}(f^{t+1})-\bigg(P_{n}'(f^{t}+g^{t}) -P_{n}'(f^{t})+P_{n}'(f^{t}) \bigg)\cdot g^{t}+\frac{1}{2}P_{n}''(\Tilde{f}^{t})\cdot (g^{t})^{2} 
        \end{split}
    \end{equation}

Perform Taylor expansion of $P_{n}'(F_{t}+f_{t})$ one more time, and we obtain

      \begin{equation}
        \begin{split}
            P_{n}'(f^{t}+g^{t})&=P_{n}'(f^{t})+P_{n}''(f^{t})\cdot g^{t}+\frac{1}{2}P_{n}'''(\hat{f}^{t+1})\cdot (g^{t})^{2} \quad \text{(where $\hat{f}^{t+1}=f^{t}+\alpha_{t}'g^{t}$, $\alpha_{t}' \in (0,1)$ )} \\
        \end{split}
    \end{equation}
By Lemma \ref{lemma-2}, we have

\[
 P_{n}'(f^{t+1})<P_{n}'(f^{t})+P_{n}''(f^{t})\cdot g^{t}
\]

Under the notations of WLS in section 3.2, we can derive the following equation by the property of WLS:

\[
\sum_{i=1}^{n}w_{i}(\hat{a}_{s(t)}x_{s(t),i}+\hat{b}_{s(t)})(z_{i}-\hat{a}_{s(t)}x_{s(t),i}-\hat{b}_{s(t)})=0
\]
which is equivalent to the following useful identity 
\[
    -P_{n}'(f^{t})\cdot g^{t}=P_{n}''(f^{t})\cdot (g^{t})^{2} 
\]
It follows that

\begin{equation}
\label{eq33}
    \begin{split}
        P_{n}(f^{t})&> P_{n}(f^{t+1})-\bigg( P_{n}''(f^{t})\cdot (g^{t})^{2}+P_{n}'(f^{t})\cdot g^{t}\bigg)+\frac{1}{2}P_{n}''(\Tilde{f}^{t})\cdot (g^{t})^{2} \\
        &> P_{n}(f^{t+1})-\bigg( P_{n}''(f^{t})\cdot (g^{t})^{2}+P_{n}'(f^{t})\cdot g^{t}\bigg)+\frac{1}{2}\gamma ||g^{t}||^{2}\\
        &=P_{n}(f^{t+1})+\frac{1}{2}\gamma ||g^{t}||^{2}
    \end{split}
\end{equation}
 All other conclusions can be derived from inequality \eqref{eq33}.
\end{proof}

\section*{Appendix D. Theorem \ref{theorem-3} proof}

\begin{proof}

Note that 

\begin{equation}
    \begin{split}
P(g(f_{M,n,t}(X))\neq Y)&=P\bigg(\bigg(\frac{1}{M}\sum_{i=1}^{M}f_{Z_i,\lfloor \alpha n\rfloor,t}(X)\bigg)\cdot Y\bigg )\leq 0)\\
&=P\bigg(\sum_{i=1}^{M}T_{i} \geq \lceil \frac{M}{2} \rceil \bigg)\\
&=P\bigg( \frac{\sum_{i=1}^{M}K_{i}}{M} \geq \frac{ \lceil \frac{M}{2} \rceil -\frac{M}{2}}{M}\bigg)
    \end{split}
\end{equation}

where $T_{i}=I(f_{Z_i,\lfloor \alpha n\rfloor,t}(X)\cdot Y\leq 0), K_{i}=2T_{i}-1$ for $i=1,2,...,M$. Note that $T_{i}$ are dependent with each other by the natural of subagging. And it's easy to see that $\frac{\sum_{i=1}^{M}K_{i}}{M}$ is actually an incomplete U-statistic whose range is $\{-1,1\}$.

In addition, $P(f_{Z_i,\lfloor \alpha n\rfloor,t}(X)\cdot Y\leq 0)$ are the same for all $i=1,2,...,m$ because that  the original $n$ observations are i.i.d by assumption.  Thus we can set $P(f_{Z_i,\lfloor \alpha n\rfloor,t}(X)\cdot Y\leq 0)=p_{sub}$ for $i=1,2,...,M$. It follows that $\theta=E[\frac{\sum_{i=1}^{M}K_{i}}{M}]=E[K_{i}]=2p_{sub}-1$ and (17) can be further written as

\begin{equation}
    P(g(f_{M,n,t}(X))\neq Y)=P\bigg( \frac{\sum_{i=1}^{M}K_{i}}{M}-(2p_{sub}-1) \geq \frac{\lceil \frac{M}{2} \rceil -\frac{M}{2}}{M}-(2p_{sub}-1)\bigg)
\end{equation}

Because of the dependency among $K_{i}$, many bounding inequalities such as Hoeffding inquality can't be applied directly. Eventhough \cite{Hine_sdep} and \cite{concentration_bounds} provided a Hoeffding' inequality for the sum of dependent random variables, the conditions required by their results are hard to check in our case. Fortunately, \cite{maurer2022exponential} gave a useful concentration bound for the incomplete U-statistic with finite sample size. Before we use that,  we borrow few notations and conventions in Maurer's work.

Let $K:\mathcal{X}^{m}\to \mathbb{R}$ be a measurable, symmetric,bounded kernel where the integer $m$ 
is called degree of kernel $K$. $\theta=\theta(\mu)=E[K(X_1,...,X_{m})]$ from a sample $\mathbf{X}=(X_1,...,X_{n}) \sim \mu^{n}$, where $n$ is the number of independent
observations. $\mathcal{D}=(D_1,...,D_{M})\in \{ D\subset [n]:|D|=m\}^{M}$ is a sequence of $M$ subsets of $\{1,2,...,n\}$ with cardinality $m$ and this sequence $\mathcal{D}$ is called the design (\cite{DesignBasedIncomUstat}). Then the incomplete $U$-statistic is defined as 

\[
U_{\mathcal{D}}(\mathbf{X})=\frac{1}{M}\sum_{i=1}^{M}K(\mathbf{X}^{D_{i}})
\]
where $K(\mathbf{X}^{D_{i}})=K(X_{i_{1}},...,X_{i_{m}})$. 

Under  subagging, sequence $\mathcal{D}=(D_1,...,D_{M})$ is actually sampled with replacement from the uniform distribution on $\{ D\subset [n]:|D|=m\}$ and number $M$ is the subagging times or the number of subaged classifier we use. 

Given a design  $\mathcal{D}=(D_1,...,D_{M})\in \{ D\subset [n]:|D|=m\}^{M}$ and $k,l\in \{ 1,2,...,n\}, k\neq l$, we define

\begin{align}
\label{eq38}
\begin{split}
        R_{k}(\mathcal{D}) &= |\{i: k\in D_{i}\}|, \quad R_{kl}= |\{i: k,l \in D_{i}\}| \quad \text{where $i \in \{1,2,...,m\}$}.\\
    A(\mathcal{D})&=\sum_{k=1}^{n}\frac{R_{k}^{2}(\mathcal{D}) }{M^2}, \quad B(\mathcal{D})=\sum_{k\neq l}^{n}\frac{R_{kl}^{2}(\mathcal{D}) }{M^2}\\
    C(\mathcal{D})&=\max_{k}\frac{R_{k}}{M}
\end{split}
\end{align}

One the other hand, for a bounded kernel $K:\mathcal{X}^{m}\to [-1,1]$ and $X_1,...,X_{m},X_{1}',X_{2}'$ i.i.d with domain $\mathcal{X}$, we define

\begin{align}
\begin{split}
    \beta(K)&=E\bigg[ \bigg( V^{1}_{X_{1},X_{1}'}V^{2}_{X_{2},X_{2}'}K(X_1,,...,X_{m})^{2} \bigg) \bigg]\\
\gamma(K)&=\sup_{\mathbf{x}\in \mathcal{X}^{m},y,y'\in \mathcal{X}}E\bigg[\bigg( V^{1}_{y,y'}V^{2}_{X_{2},X_{2}'}K(x_1,,...,x_{m})^{2} \bigg) \bigg]\\
\alpha(K)&=\bigg( \sqrt{\beta(K)/2}+\sqrt{\gamma(K)}\bigg)
\end{split}
\end{align}

We will omit the dependency of design $\mathcal{D}$ and kernel $K$ when there is no ambiguity.

Then we are ready to give the concentration bound w.r.t incomplete $U$-statistics.
\newcommand{\thistheoremname}{}
\newtheorem*{genericthm*}{\thistheoremname}
\newenvironment{namedthm*}[1]
  {\renewcommand{\thistheoremname}{#1}%
   \begin{genericthm*}}
  {\end{genericthm*}}

  \begin{namedthm*}{Bernstein-type inequality for incomplete U-statistics}[\cite{maurer2022exponential}]
 For fixed kernel $K$ with values in $[-1,1]$, design $\mathcal{D}$ and $t>0$,
\begin{equation}
     P\bigg( U_{\mathcal{D}}(\mathbf{X})-\theta >t \bigg) \leq exp\bigg( \frac{-t^2}{2A\sigma_{1}^{2}+B\beta/2+(\sqrt{B\gamma}+4C/3)t } \bigg)
\end{equation}

  \end{namedthm*}

Notice that in our case, we have $U_{\mathcal{D}}(\mathbf{X})=\frac{1}{M}\sum_{i=1}^{M}K(\mathbf{X}^{D_{i}})=\frac{1}{M}\sum_{i=1}^{M}K_{i}, \theta=2p_{sub}-1$. We can assume $p_{sub}<\frac{1}{2}$ which means we require that the error probability of subbagged classifier is not higher than $\frac{1}{2}$. In generally this assumption makes sense since each of them are Adaboosted classifier which should have better performance.

Given  fixed subbagged samplings (or desgin) $\mathcal{D}$, we now have

\begin{equation}
    P(g(f_{M,n,t}(X))\neq Y |\mathcal{D}) \leq exp\bigg( \frac{-(\frac{\lceil M/2\rceil -M/2}{M}+1-2p_{sub}
    )^2}{2A\sigma_{1}^{2}+B\beta/2+(\sqrt{B\gamma}+4C/3)(\frac{\lceil M/2 \rceil -M/2}{M}+1-2p_{sub}) } \bigg)
\end{equation}

If the design becomes random, i.e $\mathcal{D}=(D_1,...,D_{M})$ is sampled with replacement from the uniform distribution of $\{ D\subset [n]:|D|=m\}$ and is independent of original dataset $\mathbf{X}$, then quantities in \eqref{eq38} are all random variables. 

Denote the events $E_{A},E_{B},E_{C}$ as following:

\begin{align}
\begin{split}
E_{A} &=\bigg \{ \mathcal{D} \in  \{ D\subset [n]:|D|=m\}^{M} : \sqrt{A(\mathcal{D})}>\sqrt{\frac{m^2}{n}}+(1+4\sqrt{ln(3/\delta)}\sqrt{\frac{m}{M}}\bigg\}\\
E_{B} &=\bigg \{ \mathcal{D} \in  \{ D\subset [n]:|D|=m\}^{M} : \sqrt{B(\mathcal{D})}>\frac{m^2}{n}+(1+4\sqrt{ln(3/\delta)}\frac{m}{\sqrt{M}} \bigg\}\\
E_{C} &=\bigg \{ \mathcal{D} \in  \{ D\subset [n]:|D|=m\}^{M} : C(\mathcal{D})>\frac{m}{n}+\frac{\sqrt{2m}+3}{\sqrt{M}}ln(3/\delta) \bigg\}\\
\end{split}
\end{align}

According to  \textbf{Lemma 4.5} and \textbf{Lemma 4.9} in (\cite{maurer2022exponential}), for $\delta>0$ and $\sqrt{M}>ln (n)$, we have

\[
P(E_{A})\leq \frac{\delta}{3},P(E_{B})\leq \frac{\delta}{3}, \quad \text{and} \quad P(E_{C})\leq \frac{\delta}{3}
\]

It follows that with probability (where randomness comes from the subagging process) at least $1-\delta$  we have:
\begin{equation}
        P_{\mathcal{P}}(g(f_{M,n,T}(\mathbf{X}))\neq Y) \leq exp\bigg( \frac{-(\frac{\lceil M/2 \rceil -M/2}{M}+1-2p_{\text{sub}}
    )^2}{2Q_{A}^{2}\sigma_{1}^{2}+Q_{B}^{2}\beta/2+(\sqrt{Q_{B}\gamma}+4Q_{C}^{2}/3)(\frac{\lceil M/2 \rceil -M/2}{M}+1-2p_{\text{sub}}) } \bigg)
\end{equation}

\end{proof}

\section*{Appendix E. Theorem \ref{theorem-6} proof}
\begin{proof}
The proof simply combines Lemma \ref{lemma-3}, Theorem \ref{theorem-4}, Theorem \ref{theorem-5} and the fact that hypothesis space consisting of PMTs has finite VC-dimension. According to Theorem \ref{theorem-4}, it suffices to show
\[
P_{S}(YF(X)\leq 0) \leq exp\bigg(-2\sum_{t=1}^{T}\gamma_{t}^{2}(B)\bigg)
\]
Suppose the given training set is $\{(\mathbf{x}_{i},y_{i})_{i=1}^{n}\}$ and the fitted classifier after $T$ rounds of AdaBoost is 
\[
F(\mathbf{x})=\sum_{t=1}^{T}\alpha_{t}f_{t}(\mathbf{x})
\]
Suppose $f_{t}$ is a PMT fitted at $t$-th step of AdaBoost, we can rewrite $f_{t}$ as defined in section 3.5:
\[
f_{t}(\mathbf{x}_{i})=\sum_{a\in A^{t}}sign[G_{a}^{t,B}(\mathbf{x}_{i})]\cdot I(\mathbf{x}_{i}\in S_{a}^{t}))
\]
where $\{S_{a}^{t}\}_{a\in A^{t}}$ is a set of partitioned feature space determined by the CART decision tree structure at $t$-th step of AdaBoost and $A^{t}$ is the corresponding index set of partitioned space.  $G_{a}^{t,B}$ is an fitted ProbitBoost model associated with the partition $S_{a}^{t}$ where $B$ represents the iteration times of ProbitBoost .

With the definition of weighted training error $\varepsilon_{t}$, we can derive the following inequality:
\newcommand{\defeq}{\overset{\mathrm{def}}{=\joinrel=}}
\begin{equation}
\begin{split}
        \varepsilon_{t}&=\sum_{i=1}^{n}w_{t}(i)\mathbf{1}\{ f_{t}(\mathbf{x}_{i})\neq y_{i} \}\\
        & = \sum_{i=1}^{n}w_{t}(i)\mathbf{1}\{ y_{i} \cdot \sum_{a\in A^{t}}sign[G_{a}^{t,B}(\mathbf{x}_{i})]\cdot \mathbf{1}\{\mathbf{x}_{i}\in S_{a}^{t}\} \leq 0\} \\
        &=  \sum_{a\in A^{t}}\sum_{i=1}^{n}w_{t}(i)\mathbf{1}\{\mathbf{x}_{i}\in S_{a}^{t}\}\mathbf{1}\{ 
          y_{i}G_{a}^{t,B}(\mathbf{x}_{i})\leq 0\}\\
        &\leq \sum_{a\in A^{t}}\sum_{i=1}^{n}w_{t}(i)\mathbf{1}\{\mathbf{x}_{i}\in S_{a}^{t}\}\frac{-ln(y_{i}G_{a}^{t,B}(\mathbf{x}_{i}))}{ln2}\\
        &= \sum_{a\in A^{t}}\sum_{\mathbf{x}_{i}\in S_{a}^{t}}w_{t}(i)\frac{-ln(y_{i}G_{a}^{t,B}(\mathbf{x}_{i}))}{ln2}\defeq \frac{\varepsilon_{t}^{P}(B)}{ln(2)}
\end{split}
\label{ineq18}
\end{equation}
By assumption, we have $\frac{\varepsilon_{t}^{P}(B)}{ln(2)}<\frac{1}{2}$ for $t=1,2,...,T$. By Theorem \ref{theorem-5}, it follows that 
\begin{equation}
    \begin{split}
        P_{S}(YF(X)\leq 0) &\leq 2^{T}\prod_{t=1}^{T}\sqrt{\varepsilon_{t}(1-\varepsilon_{t})}\leq 2^{T}\prod_{t=1}^{T}\sqrt{\frac{\varepsilon_{t}^{P}(B)}{ln(2)}\bigg(1-\frac{\varepsilon_{t}^{P}(B)}{ln(2)}\bigg)} \\
        &= \prod_{t=1}^{T}\sqrt{1-4(\gamma_{t}^{P}(B))^2}\\
        & \leq exp\bigg(-2\sum_{t=1}^{T}(\gamma_{t}^{P}(B))^2\bigg)
    \end{split}
\end{equation}
where $\gamma_{t}^{P}(B)=\frac{1}{2}-\frac{\varepsilon_{t}^{P}(B)}{ln(2)}$.
\end{proof}

\vskip 0.2in
\bibliographystyle{unsrtnat}
\bibliography{references}

\end{document}